\pdfoutput=1
\documentclass[11pt,a4paper]{article}
\usepackage[margin=1in]{geometry}
\usepackage[utf8]{inputenc}
\usepackage[T1]{fontenc}
\usepackage{amsmath,amssymb,amsthm}
\usepackage{booktabs}
\usepackage{microtype}
\usepackage[colorlinks=true,linkcolor=blue,citecolor=blue,urlcolor=blue]{hyperref}
\usepackage{parskip}
\usepackage{array}

\newtheorem{theorem}{Theorem}[section]
\newtheorem{proposition}[theorem]{Proposition}
\newtheorem{lemma}[theorem]{Lemma}
\newtheorem{corollary}[theorem]{Corollary}
\newtheorem{definition}[theorem]{Definition}
\newtheorem{example}[theorem]{Example}
\newtheorem{remark}[theorem]{Remark}

\DeclareMathOperator{\Dom}{Dom}
\DeclareMathOperator{\Safe}{Safe}
\DeclareMathOperator{\Reach}{Reach}
\DeclareMathOperator{\Next}{Next}
\newcommand{\Pspec}{\preceq_{\mathrm{spec}}}
\newcommand{\PICT}{\mathcal{P}_{ICT}}
\newcommand{\Dcomp}{D_{\mathrm{comp}}}
\newcommand{\DcompHTF}{D_{\mathrm{comp}}^{\mathrm{HTF}}}
\newcommand{\Pifull}{\Pi_{\mathrm{full}}}

\title{Determinization in Structure Theories:\\
A Unified Framework via Closure, Comparability,\\
and Joint Admissibility}

\author{Hai Hai Fu%
\thanks{A preliminary version is available at SSRN~6582699.
\url{https://ssrn.com/abstract=6582699}}}

\date{Version v2.16.4 \quad April 2026}

\begin{document}
\maketitle

\begin{abstract}
Motivated by structural failures in LLM-assisted reasoning and decision
systems, we develop a formal framework for constructing canonical
interpretations from plural structure theories. A \emph{structure theory}
is a triple $T = (\Sigma, A, I)$ of a signature, axioms, and inference
policy. Its \emph{admissible interpretation family} $\mathcal{P}_T$
collects all globally consistent assignments of structural conclusions to
positions. In LLM-assisted reasoning, hallucination can be viewed as
unsupported canonicalization---the system emits a determinate answer when
the underlying admissible set has not collapsed to a singleton. Our
framework formalizes when canonicalization is licensed.

Not all structure theories are deterministic. We classify non-determinism
into \emph{structural plurality} (Type~S) and \emph{epistemic plurality}
(Type~E), instantiated by Wyckoff and ICT respectively, relative to the
intrinsically deterministic Chan theory.

Our framework distinguishes three increasingly strong notions of
``canonical'': \emph{closure stabilization} (per-seed convergence to some
fixed point, property~(4')); \emph{global completion} (all seeds converge
to a single fixed point, property~(4)); and \emph{determinization} in the
AC-6 sense (a unique admissible interpretation per input). The
construction lemma of \S5.3 (Theorem~6b) records sufficient structural
conditions under which a canonicalization mechanism (completion operator
$\Dcomp$ for Type~E, selector PhaseClassify for Type~S-strong) can be
constructed at closure-stabilization strength. Theorem~6b is a
construction lemma; its substantive content is unpacked in Corollary~6b'
(the Class~C classification of pure $I^{\mathrm{core}}$-computable
completion under R1+R2+R3 plus rule Soundness, proved via Lemma~C and
the Monotonic Exhaustion Lemma in \S8.2). For Type~E theories (ICT),
upgrading from~(4') to~(4) and AC-6 requires an additional global
confluence result; for ICT this is the open question OQ-GC-1, and we are
explicit that \emph{the present paper does not establish ICT
determinization at AC-6 strength}. For Type~S-strong theories (Wyckoff),
AC-6 is achieved directly via canonical selection (PhaseClassify).

Two canonicalization forms exist: \emph{operator-based completion} and
\emph{selector-based construction}. These are not the same kind of
mathematical object: completion is operator-theoretic; selection is
selector-based. Completion requires closure, comparability, and a
compatible-extension condition (C5-E); selection requires theory-intrinsic
comparability (N4) and global compatibility of selected choices (C5-S).
A minimal counterexample $T_{ce}$ shows that closure alone does not
suffice: theory-intrinsic comparability (N4) is necessary.

We further show that completion extends to multi-timeframe hierarchies via
a staged operator $\DcompHTF = D_L \circ D_H$, and establish that this
operator is \textbf{structurally non-commutative}: within the specified
two-operator raw staged architecture, HTF-first ordering is the only
admissibility-preserving linearization.

Section~8 presents a classification of canonicalization mechanisms within
the MST framework. We define theory-intrinsic operations independently of
the mechanism taxonomy (definable from $T$ and automorphism-invariant);
Theorem~U3, conditional on the realization-coverage assumption tracked as
OQ-Realization and on per-realization branch hypotheses (R1+R2+R3 +
Soundness + elementary $F$-step for (3a); theory-intrinsic
$\preceq_{\mathrm{sel}}$ + C5-S for (3b); R1+R2+R3 + (G2) + (G3) +
Soundness for (3c)), classifies every theory-intrinsic mechanism realized
within the (3a)/(3b)/(3c) grammar into Class~C (closure) or Class~S
(selection). The classification is at closure-stabilization
strength~(4'); whether a Class~C operator additionally achieves
property~(4) or AC-6 is instance-specific. The section also introduces
canonicalization as a partial pseudofunctor (Det).

For Type~S, the relationship to selection is more nuanced than a strict
dichotomy. We isolate \textbf{Type~S-strong} as the subclass where
$\mathcal{P}_T$ contains two admissible interpretations with no common
upper bound under $\Pspec$ (\S4.2). Only for Type~S-strong instances is
property~(4) (global completion) provably impossible; the weaker~(4')
remains attainable. Wyckoff is Type~S-strong, so the selection
construction is justified for $T_{Wy}$; the generic Type~S $\to$
selection inference is conditional on this strengthening and is left open
(OQ-TypeS-Imp).

Phase~2 (\S10) characterizes canonicalization under multiple operators
with precedence constraints (summary statement; full proofs in companion
artifacts). Feasibility is equivalent to the vanishing of the safety gap
$\Phi^*$. Under pairwise local compatibility
($\mathrm{PLC}_{\mathcal{S}}$), the existence conditions (C2) and (C2')
collapse. The compatibility-condition strength ordering is
$\mathrm{GCC} \Rightarrow \mathrm{RUC}_{\mathcal{S}} \Rightarrow
\mathrm{PLC}_{\mathcal{S}}$ (strict in the seed-dependent setting,
collapsing to $\mathrm{GCC} \Rightarrow \mathrm{RUC} \equiv \mathrm{PLC}$
seed-independently); detailed proofs are recorded in companion artifacts
and not reproduced in this paper.

\medskip
\noindent\textbf{Core principle.} \emph{Completion-based canonicalization
requires closure, comparability, and compatible extension; selection-based
canonicalization requires theory-intrinsic comparability and global
compatibility of selected choices. Not all canonicalization is
operator-theoretic. Under hierarchical interaction, canonicalization
operators form a structurally non-commutative system.}
\end{abstract}

\tableofcontents
\newpage

\section{Introduction}

This paper addresses a single question:

\medskip
\noindent\textbf{When can a structure theory be made canonical?}
\medskip

The motivating AI setting is LLM-assisted structural decision systems: a
language model proposes candidate interpretations, while a symbolic
structure theory specifies admissible interpretations. Canonicalization is
the problem of deciding when the system is licensed to output a unique
interpretation rather than a set of admissible alternatives. Hallucination,
in this view, is unsupported canonicalization---the system emits a
determinate answer when the underlying admissible set has not collapsed to
a singleton. We formalize the structural conditions under which
canonicalization is licensed.

Most market structure frameworks are \emph{not} deterministic: multiple
valid interpretations coexist at the same position. We formalize this
question and characterize sufficient structural conditions under which a
canonical reading can be constructed.

We formalize using structure theories $T = (\Sigma, A, I)$, and introduce
\emph{canonicalization} as the construction of a canonical extension
$T^{\mathrm{canon}}$ from a plural base $T$. Our framework operates at
three strengths:

\begin{itemize}
\item \textbf{Closure stabilization} (property~(4'), \S4.1): each
  admissible seed converges under iteration of an operator to \emph{some}
  fixed point. This is the strength established by Theorem~6b's completion
  branch and by Corollary~6b'.
\item \textbf{Global completion} (property~(4)): all seeds converge to a
  \emph{single} fixed point. This requires global confluence of the
  operator's ARS, which is theory-specific and not implied by~(4'). For
  ICT this is open (OQ-GC-1).
\item \textbf{Determinization} (AC-6): the unique fixed point is the sole
  admissible interpretation. This is the strongest claim and is achieved
  directly by canonical selection (e.g., PhaseClassify for Wyckoff). For
  ICT, AC-6 is conditional on OQ-GC-1.
\end{itemize}

We are explicit throughout the paper about which claim applies at each
strength, and in particular about the fact that \textbf{the present paper
establishes ICT closure stabilization but does not establish ICT
determinization at AC-6 strength}.

Our main contributions are:

\begin{enumerate}
\item \textbf{Taxonomy} (\S3, \S4.2): two-type classification of
  non-determinism (Type~S vs.\ Type~E), with the \emph{Type~S-strong}
  refinement (no common upper bound) for which selection is provably the
  unique mechanism reaching AC-6.
\item \textbf{Construction lemma} (\S5.3, Theorem~6b): sufficient
  structural conditions (N0--N4 + C5, with mechanism-specific variants
  C5-E and C5-S) under which the completion operator $\Dcomp$ or the
  selector PhaseClassify can be constructed. Theorem~6b is a construction
  lemma: given the conditions, the mechanism is built. Its substantive
  depth is unpacked by Corollary~6b' (\S5.3).
\item \textbf{Classification of $I^{\mathrm{core}}$-computable completion}
  (Corollary~6b', \S5.3 + Lemma~C and Monotonic Exhaustion Lemma, \S8.2):
  under R1+R2+R3 \emph{plus rule Soundness relative to $A$}, every
  (3a)-computable completion operator (interpreted as the saturated
  closure $F^\omega$ under elementary one-rule firing) is automatically
  monotone, extensive, admissibility-preserving, and converges per seed
  to a fixed point---hence is Class~C at closure-stabilization strength.
\item \textbf{Dual mechanism} (\S4.1, \S4.2): two fundamentally different
  canonicalization forms---operator-based completion (Type~E) vs.\
  selector-based construction (Type~S-strong, axiom-induced order). The
  two constructions yield different mathematical objects.
\item \textbf{Counterexample} $T_{ce}$ (\S5.2): a minimal theory
  satisfying N0+N1+N3 but failing both N4-sel and N4-comp; demonstrates
  that theory-intrinsic comparability (N4) is necessary.
\item \textbf{Multi-timeframe extension} (\S6.2): staged operator
  $\DcompHTF = D_L \circ D_H$ on $\PICT^{\mathrm{multi}}$, with explicit
  verification of \S4.1 properties at~(4') strength.
\item \textbf{Structural non-commutativity} (\S7): a concrete witness $P$
  such that $(D_L \circ D_H)(P) \in \PICT^{\mathrm{multi}}$ while
  $(D_H \circ D_L)(P) \notin \PICT^{\mathrm{multi}}$, with ordering
  uniqueness scoped to the two-operator raw staged architecture.
\item \textbf{Mechanism classification} (\S8): asymmetric reduction (R2*,
  parts~i and~ii); independent definition of ``theory-intrinsic'' not
  predicated on (3a)/(3b)/(3c); Theorem~U3, conditional on
  realization-coverage (OQ-Realization) and on per-realization branch
  hypotheses, classifies every theory-intrinsic mechanism realized within
  (3a)/(3b)/(3c) into Class~C or Class~S; Det as a partial pseudofunctor
  (\S8.4).
\item \textbf{Phase~2 (multi-operator canonicalization)} (\S10): summary
  of existence conditions ($\Phi^* = 0 \Leftrightarrow$ admissible
  linearization), collapse under
  $\mathrm{PLC}_{\mathcal{S}}$ (C2 $\equiv$ C2' $\equiv$ existence), and
  the compatibility-strength hierarchy
  $\mathrm{GCC} \Rightarrow \mathrm{RUC}_{\mathcal{S}} \Rightarrow
  \mathrm{PLC}_{\mathcal{S}}$ (summary statement; full proofs are
  recorded in companion artifacts and not reproduced here).
\end{enumerate}

\noindent\textit{Scope note.} This paper unifies several lines of MST
development under a single framework. Each of contributions~1--3, 6--7,
8, and~9 could be developed in a focused follow-up paper, and we
anticipate such future splits as the framework matures. The \S11
changelog tracks the v2.15.1 $\to$ v2.16 $\to$ v2.16.2 $\to$ v2.16.3
$\to$ v2.16.4 errata revisions.

\section{Framework}

\subsection{Structure Theories}

\begin{definition}
A \emph{structure theory} is $T = (\Sigma, A, I)$ where $\Sigma$ is a
signature, $A$ axioms, and $I = I^{\mathrm{core}} \cup I^{\mathrm{ext}}$
an inference policy (this paper concerns only $I^{\mathrm{core}}$).
\end{definition}

\noindent\textbf{Input tuple.} Theories are evaluated on inputs
$(N, \Theta, \Pi)$ where $N$ is an observed market realization, $\Theta$
is the set of theory-relevant parameters, and $\Pi$ is a structural
evidence assignment. Domain objects: $R$ = set of regions/instruments,
$\mathsf{Time}$ = temporal index set, $\mathcal{C}$ = structural
conclusion vocabulary.

\begin{definition}[Finite evaluation domain]
Throughout we assume:
(a) $\mathcal{C}$ is finite;
(b) $\Dom(N,\Theta) \subseteq R \times \mathsf{Time}$ is finite for every
$(N,\Theta)$. Hence
$|\mathcal{P}_T(N,\Theta,\Pi)| \leq (2^{|\mathcal{C}|})^{|\Dom(N,\Theta)|}
< \infty$.
\end{definition}

\noindent\textbf{Definitional constraints on $I^{\mathrm{core}}$.}
$I^{\mathrm{core}}$ is restricted to \emph{positive, non-retractive}
inference:

\begin{itemize}
\item \textbf{(R1) Positive rule form}: every rule only adds structural
  conclusions; no rule removes conclusions.

\item \textbf{(R2) No negation-as-failure in $I^{\mathrm{core}}$}: every
  rule has a premise that is a positive conjunction of conditions---each
  condition asserts the \emph{presence} of a pattern or comparison on
  observed data, or the \emph{presence} of an already-derived predicate,
  but never the \emph{absence} of a derived predicate.

  \emph{Scope}: R2 constrains $I^{\mathrm{core}}$. The determinization
  layer (\S4) may employ conflict-resolution mechanisms (e.g., M2/M2-HTF
  guards, \S6.2) that are not $I^{\mathrm{core}}$ rules and are not
  constrained by R2.

\item \textbf{(R3) No retraction}: no rule withdraws or invalidates
  previously derived conclusions.
\end{itemize}

These constraints are definitional and \emph{syntactic}---they constrain
the shape of rules. They are independent of \emph{semantic} soundness
(consistency of rule conclusions with the axioms~$A$); see \S5.3 and the
soundness Remark following Corollary~6b'. Conflict resolution among
admissible conclusions is not the responsibility of $I^{\mathrm{core}}$;
it is handled by the determinization mechanism (Class~C completion or
Class~S selection). This separation is the architectural basis for the
mechanism classification in \S8.

\begin{definition}
The \emph{admissible interpretation family} $\mathcal{P}_T(N,\Theta,\Pi)$
is the set of all assignments
$P: \Dom(N,\Theta) \to 2^{\mathcal{C}}$ satisfying all axioms~$A$ and
consistent with $I^{\mathrm{core}}$.
\end{definition}

\begin{definition}[Specification order]
For $P, Q \in \mathcal{P}_T$, write $P \Pspec Q$ if
$P(r,t) \subseteq Q(r,t)$ for all $(r,t) \in \Dom(N,\Theta)$. This is a
partial order on $\mathcal{P}_T$.
\end{definition}

\begin{definition}[AC-6 and AC-6']
$T$ is \emph{deterministic} (AC-6) if $|\mathcal{P}_T(N,\Theta,\Pi)| = 1$
for all inputs. $T$ is \emph{plural} (AC-6') if
$|\mathcal{P}_T(N,\Theta,\Pi)| \geq 2$ for some input.
\end{definition}

\section{Taxonomy of Non-Determinism}

\subsection{Formal Predicates}

\begin{definition}[Type~E --- Epistemic Plurality]
$T$ is \emph{Type~E} if its plurality vanishes under structural evidence
refinement: for every input with $|\mathcal{P}_T| > 1$, there exists a
refined $\Pi' \supseteq \Pi$ (with $N$, $\Theta$ fixed) such that
$|\mathcal{P}_T(N,\Theta,\Pi')| = 1$.
\end{definition}

\begin{definition}[$\Pifull$]
A \emph{maximally informative evidence assignment} assigns a definite
truth value to every theory-relevant structural predicate in
$\mathcal{C}$.
\end{definition}

\begin{definition}[Type~S --- Structural Plurality]
$T$ is \emph{Type~S} if its plurality persists at $\Pifull$: there
exists $(N,\Theta,\Pifull)$ with
$|\mathcal{P}_T(N,\Theta,\Pifull)| > 1$.
\end{definition}

\begin{definition}[Type~S-strong --- No Common Upper Bound]
$T$ is \emph{Type~S-strong} if $T$ is Type~S and there exists an input
$(N,\Theta,\Pifull)$ at which two distinct admissible interpretations
$P_1, P_2 \in \mathcal{P}_T(N,\Theta,\Pifull)$ have \emph{no common
upper bound} in $(\mathcal{P}_T, \Pspec)$.
\end{definition}

\begin{remark}[Type~S vs.\ Type~S-strong]
Type~S is the categorical predicate; Type~S-strong is the substructural
strengthening. A Type~S theory in which $P_1 \Pspec P_2$ at $\Pifull$ is
Type~S but not Type~S-strong; the impossibility result of \S4.2 does not
apply in such cases. The canonical Type~S example $T_{Wy}$ is
Type~S-strong (axiomatic phase incomparability via [Wy-Sem-1]).
\end{remark}

\begin{remark}[Monotonicity and exhaustiveness]
Under evidence-refinement monotonicity ($\Pi \subseteq \Pi' \Rightarrow
\mathcal{P}_T(N,\Theta,\Pi') \subseteq \mathcal{P}_T(N,\Theta,\Pi)$):
Type~E $\iff$ unique at $\Pifull$ for all $(N,\Theta)$;
Type~S $\iff$ not unique at $\Pifull$ for some $(N,\Theta)$.
The two types are exhaustive for plural theories; canonical instances
(ICT = Type~E, Wyckoff = Type~S-strong) are purely one type.
\end{remark}

\subsection{Examples}

\textbf{Type~S-strong: Wyckoff ($T_{Wy}$).} Axiom [Wy-Sem-1] makes
\texttt{in\_phase} relational; admissible phases at $\Pifull$ are
axiomatically incomparable, with no element of $\mathcal{P}_{Wy}$
extending two distinct phase assignments. Hence Type~S-strong.

\textbf{Type~E: ICT ($T_{ICT}$).} Plurality is epistemic: once additional
bars resolve the structure, exactly one completion survives.

\textbf{Intrinsically Deterministic: Chan ($T_{\mathrm{Chan}}$).} Chan is
deterministic by the unique-normal-form theorem for its term rewrite
system $\mathcal{R}_{\mathrm{Chan}}$.

\medskip\noindent\textit{Chan rewrite presentation (sketch).}
$\mathcal{R}_{\mathrm{Chan}}$ encodes the bi-directional constituency
relations of Chan theory, a hierarchical market structure framework with
multiple structural levels (elementary strokes, line segments, pivot
zones). The TRS is asserted to satisfy termination (rules strictly
decrease structural nesting depth) and local confluence (critical pairs
are joinable). By Newman's Lemma, $\mathcal{R}_{\mathrm{Chan}}$ is
confluent: every expression has a unique normal form, establishing AC-6
without external construction.

\medskip\noindent\textit{Open question (OQ-Chan-TRS).} Termination and
local confluence are stated here as the standard informal account in the
Chan literature. A self-contained formal proof is left as an open
question. The framework does not depend on resolving OQ-Chan-TRS for
Type~E or Type~S-strong results; Chan is used as a comparator, not a
proof dependency.

\medskip
\begin{center}
\begin{tabular}{lll}
\toprule
Theory & Type & Determinism \\
\midrule
Chan & Rewrite-presentable (pending OQ-Chan-TRS) & AC-6 (unique normal form) \\
ICT  & Type~E & AC-6' ((4') established; AC-6 conditional on OQ-GC-1) \\
Wyckoff & Type~S-strong & AC-6 (via PhaseClassify selection) \\
\bottomrule
\end{tabular}
\end{center}

\section{Determinization Mechanisms}

\subsection{Operator-Based Completion (Type~E)}

\begin{definition}[Determinization operator]
\label{def:det-op}
Let $\mathcal{D}_T \subseteq \mathcal{P}_T$ be a canonical closure
domain. A \emph{determinization operator} is a map
$D: \mathcal{D}_T \to \mathcal{D}_T$ satisfying:
\begin{enumerate}
\item \textbf{Extensivity}: $P \Pspec D(P)$ for all
  $P \in \mathcal{D}_T$.
\item \textbf{Idempotence}: $D(D(P)) = D(P)$ for all
  $P \in \mathcal{D}_T$.
\item \textbf{Admissibility preservation}: $D(P) \in \mathcal{D}_T$ for
  all $P \in \mathcal{D}_T$.
\item \textbf{Per-seed convergence (Property~(4'))}: for every
  $(N,\Theta,\Pi)$ and every seed
  $P \in \mathcal{D}_T(N,\Theta,\Pi)$, $\exists\, k \leq
  |\mathcal{D}_T(N,\Theta,\Pi)|$ and a fixed point
  $P^*(P) \in \mathcal{D}_T$ such that $D^k(P) = P^*(P)$ and
  $D(P^*(P)) = P^*(P)$.
\end{enumerate}
The stronger \textbf{Property~(4)} additionally requires all seeds
converge to the same fixed point.
$\mathrm{AC\text{-}6} \Rightarrow (4) \Rightarrow (4')$.
\end{definition}

\begin{remark}[Three levels of canonicalization]
\emph{Completion (weak)}: satisfies (1)--(3) and~(4'); closure
stabilization. \emph{Completion (strong)}: satisfies (1)--(3) and~(4);
global completion. \emph{Determinization} (AC-6): unique fixed point is
the sole admissible interpretation.
\end{remark}

\begin{definition}[Uniqueness-inducing operator]
$D: \mathcal{D}_T \to \mathcal{D}_T$ is \emph{uniqueness-inducing} if it
satisfies property~(4): there exists, for each instance $(N,\Theta,\Pi)$,
a single $P^*(N,\Theta,\Pi) \in \mathcal{D}_T(N,\Theta,\Pi)$ such that
$D^k(P) = P^*(N,\Theta,\Pi)$ for all seeds $P$ and some $k$ uniformly
bounded across seeds.
\end{definition}

An operator satisfying only~(4') (per-seed convergence) is \emph{not}
uniqueness-inducing in this sense---different seeds may stabilize at
distinct fixed points $P^*(P)$.

\textbf{ICT instance.} $\Dcomp(P) = \mathrm{Cl}_{A_{ICT},\Pi}(P)$.
Under M1+M2+M3, $\Dcomp$ satisfies all four properties on
$\mathcal{D}_T = \mathcal{P}_{ICT}^{\mathrm{closure}}$, with property~(4)
at strength~(4') only---the upgrade to global~(4) is OQ-GC-1.

\subsection{Selector-Based Construction (Type~S-strong)}

For Type~E theories, completion naturally applies. For Type~S-strong
theories---those with incomparable admissible interpretations at
$\Pifull$---completion at property~(4) strength is provably impossible,
and canonicalization at AC-6 strength must use a canonical selector. The
weaker property~(4') is not blocked.

\begin{proposition}[Completion at~(4)-strength impossible for Type~S-strong]
\label{prop:completion-impossible}
Let $T$ be Type~S-strong, witnessed at $\Pifull$ by
$P_1, P_2 \in \mathcal{P}_T(N,\Theta,\Pifull)$ with no common upper bound
in $(\mathcal{P}_T, \Pspec)$. Then for any extensive operator
$D: \mathcal{D}_T \to \mathcal{D}_T$ with $P_1, P_2 \in \mathcal{D}_T$,
property~(4) fails on this instance.
\end{proposition}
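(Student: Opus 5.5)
The argument is by contradiction, using only extensivity of $D$, the fact that $D$ maps $\mathcal{D}_T$ into $\mathcal{D}_T \subseteq \mathcal{P}_T$, and the no-common-upper-bound hypothesis on $P_1, P_2$.

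Suppose property~(4) holds on the instance $(N,\Theta,\Pifull)$. Then there is a single $P^* \in \mathcal{D}_T(N,\Theta,\Pifull)$ such that every seed reaches $P^*$ after finitely many iterations. In particular there are $k_1,k_2$ with $D^{k_1}(P_1) = P^*$ and $D^{k_2}(P_2) = P^*$. If we use the uniform bound in the definition of a uniqueness-inducing operator, we may take $k_1 = k_2 = k$, but this is not needed.

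Next we show that iterates of an extensive operator stay above the seed. Extensivity gives $Q \Pspec D(Q)$ for every $Q \in \mathcal{D}_T$. Admissibility preservation keeps every iterate inside $\mathcal{D}_T$, so extensivity applies again at each step. Since $\Pspec$ is a partial order, in particular transitive (it is pointwise inclusion), induction on $j$ gives $P \Pspec D^j(P)$ for all $j \ge 0$ and all seeds $P \in \mathcal{D}_T$. Applying this to $P_1$ and $P_2$ yields $P_1 \Pspec P^*$ and $P_2 \Pspec P^*$.

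Finally, $P^* \in \mathcal{D}_T \subseteq \mathcal{P}_T(N,\Theta,\Pifull)$. So $P^*$ is a common upper bound of $P_1$ and $P_2$ in $(\mathcal{P}_T, \Pspec)$, which contradicts the Type~S-strong witness hypothesis. Hence property~(4) fails on this instance.

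The only delicate point is bookkeeping rather than mathematics. The upper bound must live in the same poset $(\mathcal{P}_T(N,\Theta,\Pifull), \Pspec)$ in which ``no common upper bound'' is asserted. This requires two things. First, $\mathcal{D}_T$ must be read instance-wise as a subset of $\mathcal{P}_T(N,\Theta,\Pifull)$, as in Definition~\ref{def:det-op}. Second, $D$ must not move between instances, since $\Pi$ is held fixed at $\Pifull$.

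We also note that the argument uses neither idempotence nor the bound on $k$. It therefore leaves property~(4') untouched: distinct seeds may stabilize at distinct fixed points $P^*(P_1) \neq P^*(P_2)$, consistent with the remark preceding the proposition.
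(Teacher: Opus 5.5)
Your proof is correct and uses the same idea as the paper: extensivity gives $P_1 \Pspec P^*$ and $P_2 \Pspec P^*$, so $P^* \in \mathcal{D}_T \subseteq \mathcal{P}_T$ would be a common upper bound, contradicting Type~S-strong. The only difference is in how $D^{k_i}(P_i) = P^*$ is obtained. The paper uses finiteness of $\mathcal{D}_T$ to stabilize the chain at a fixed point $D^{k_1}(P_1)$, then applies property~(4) to that fixed point as a seed to identify it with $P^*$. You read $D^{k_i}(P_i) = P^*$ directly off property~(4), which is legitimate since (4) includes (4'), so your version does not need finiteness.
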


\begin{proof}
Suppose for contradiction that $D$ is extensive and property~(4) holds
with unique convergence point $P^*$.

\emph{Step~1 (ascending chain).} Extensivity gives
$P_1 \Pspec D(P_1) \Pspec D^2(P_1) \Pspec \cdots$. Since
$\mathcal{D}_T \subseteq \mathcal{P}_T$ is finite, the chain stabilizes:
$\exists\, k_1 \leq |\mathcal{D}_T|$ with
$D^{k_1}(P_1) = D^{k_1+1}(P_1)$.

\emph{Step~2 (stabilization equals $P^*$).} $D^{k_1}(P_1)$ is a fixed
point of $D$ in $\mathcal{D}_T$, hence a valid seed. By~(4), every seed
converges to $P^*$; the constant orbit from $D^{k_1}(P_1)$ converges to
itself, so $D^{k_1}(P_1) = P^*$. By transitivity, $P_1 \Pspec P^*$.

\emph{Step~3 (symmetric).} $P_2 \Pspec P^*$.

\emph{Step~4 (contradiction).} $P^*$ is a common upper bound of
$\{P_1, P_2\}$, contradicting Type~S-strong.

The weaker property~(4') is not ruled out: $P_1$ and $P_2$ may
individually converge to distinct fixed points $P_1^*, P_2^*$.
\qed
\end{proof}

\begin{remark}[Scope]
The proposition (a) requires Type~S-strong (not merely Type~S);
(b) excludes property~(4) but not~(4');
(c) is mechanism-agnostic except for extensivity. For $T_{Wy}$, all three
apply: $T_{Wy}$ is Type~S-strong, so any extensive $D$ on
$\mathcal{D}_{T_{Wy}}$ fails~(4); the canonical selector PhaseClassify
achieves AC-6 directly.
\end{remark}

\begin{remark}[Open question OQ-TypeS-Imp]
Type~S theories that are not Type~S-strong are not addressed by this
proposition. Whether such theories admit completion at~(4) strength
depends on the upper-bound structure of $\mathcal{P}_T$ and is left as
OQ-TypeS-Imp. The dichotomy ``Type~E $\leftrightarrow$ completion,
Type~S $\leftrightarrow$ selection'' is a sound implication only for
Type~E (at~(4') strength) and Type~S-strong (at AC-6 strength).
\end{remark}

\begin{remark}[ICT instance]
For $T_{ICT}$, $\Dcomp$ satisfies (1)--(3) and~(4') (per-seed
convergence). Whether all seeds converge to the same fixed
point---property~(4)---requires a global confluence result for
$A_{ICT}$ (OQ-GC-1, not established here). The proposition above
concerns Type~S-strong; $T_{ICT}$ (Type~E) is not in its scope.
\end{remark}

\begin{definition}[Canonical selector]
A function $\phi: (r,t,\Theta,\Pi) \mapsto c^* \in A_{r,t}$,
theory-intrinsically defined and globally compatible.
\end{definition}

\textbf{Wyckoff instance.}
$\mathrm{PhaseClassify}(r,t;\Theta,\Pi) =
\max_{\preceq_{\mathrm{sel}}} A_{r,t}$, where $\preceq_{\mathrm{sel}}$
is induced by event dominance (C4-Q2) and MEP ensures global
compatibility.

\medskip
\begin{center}
\begin{tabular}{lll}
\toprule
& Completion ($\Dcomp$) & Selection (PhaseClassify) \\
\midrule
Mathematical type & Operator on $\mathcal{D}_T$ & Selector on $A_{r,t}$ \\
Mechanism & Closure to fixed point & Max under total order \\
Extensivity required & Yes & No \\
Domain & $\mathcal{P}_{ICT}^{\mathrm{closure}}$ & Full (under MEP) \\
\bottomrule
\end{tabular}
\end{center}

\section{Sufficient Conditions for Canonicalization (Theorem~6b)}

\subsection{Conditions}

\begin{itemize}
\item \textbf{N0} (universal, non-emptiness):
  $\mathcal{P}_T(N,\Theta,\Pi) \neq \varnothing$ for every input.

\item \textbf{N1} (selector-specific, finiteness): $|A_{r,t}| < \infty$.

\item \textbf{N3} (operator-specific, bounded closure):
  $\exists\, k: \mathrm{Cl}^k(P) = \mathrm{Cl}^{k+1}(P)$ on the canonical
  seed domain (M3 ensures this; bound
  $|\mathcal{C}| \cdot |\Dom(N,\Theta)|$).

\item \textbf{N4} (unified comparability): $T$ admits a theory-intrinsic
  comparability-inducing structure $\mathcal{K}$ providing local
  comparability, unique selection or unique closure fixed point,
  derivability, and global compatibility.

  \emph{Refined treatment.} The structural definition---order tuple
  $\mathcal{K} = (D, \preceq_{\mathcal{K}}, \mathcal{A}, \mathcal{C})$
  with selector-/closure-/piecewise-compatible specializations---is given
  in \S8.3. Theorem~6b's branches use N4-comp (closure-compatible) and
  N4-sel (selector-compatible).

\item \textbf{C5} (joint admissibility, mechanism-specific). C5 is
  expressed as two distinct conditions according to mechanism type:

  \textbf{C5-S} (selection variant, \emph{global compatibility of
  selected choices}): let
  $c^*_{r,t} := \max_{\preceq_{\mathrm{sel}}} A_{r,t}$ be the selector
  output at $(r,t)$. Then the selected assignment
  $$P^*: (r,t) \mapsto \{c^*_{r,t}\}$$
  extends to an element of $\mathcal{P}_T$. In multi-label form,
  $\{c^*_{r,t} : (r,t) \in \Dom(N,\Theta)\} \in \mathcal{P}_T$. C5-S
  asserts global admissibility of the selector's output, not joint
  admissibility of local alternatives---selection in Type~S-strong
  precisely operates over alternatives that lack a common upper bound,
  so requiring local alternatives to be jointly admissible would
  conflict with the Type~S-strong hypothesis.

  \textbf{C5-E} (completion variant, \emph{compatible extension within
  the closure domain}): for every input and every pair
  $c_1, c_2 \in \mathcal{C}$ such that
  $\exists\, P_1, P_2 \in \mathcal{P}_T^{\mathrm{closure}}$ and shared
  $(r,t)$ with $c_1 \in P_1(r,t)$, $c_2 \in P_2(r,t)$: if the
  closure-domain conflict relation does not mark $\{c_1, c_2\}$ as
  forbidden at $(r,t)$, then
  $\exists\, P^+ \in \mathcal{P}_T^{\mathrm{closure}}$ with
  $\{c_1, c_2\} \subseteq P^+(r,t)$.

  The \emph{closure-domain conflict relation} is the set of forbidden
  predicate pairs whose exclusion defines
  $\mathcal{P}_T^{\mathrm{closure}}$. For $T_{ICT}$ single-timeframe,
  this is M2 (Appendix~A.1); for $T_{ICT}^{\mathrm{multi}}$, it is
  $\mathrm{M2} \cup \mathrm{M2\text{-}HTF}$ (Appendix~A.2). C5-E asserts
  that non-forbidden co-generating predicate pairs have a common
  closure-domain witness.

  \emph{Asymmetry between C5-S and C5-E.} The two variants are not
  parallel: C5-S is a forward condition on the selector's output (the
  selected assignment is admissible); C5-E is a closure-domain extension
  property (non-conflicting predicates jointly extend). The two share
  the structural intuition that canonicalization output must be globally
  consistent, but operate on different mathematical objects (a selector
  function vs.\ a set-valued closure operator).
\end{itemize}

\noindent\emph{C5 and N4 are independent conditions.} $T'$ witnesses N4
without C5-S ($\mathcal{P}_{T'} = \{\{a\},\{b\}\}$ with
$a \prec_{\mathrm{sel}} b$, so $c^* = b$ at every point, but if
selecting $b$ globally violates some other axiom in $A'$, the global
admissibility of the selected assignment fails). $T''$ witnesses C5-E
without N4 ($\mathcal{P}_{T''} = \{\{a\},\{b\},\{a,b\}\}$, with
$\{a,b\}$ jointly realizable but no theory-intrinsic order or closure
fixed point).

\subsection{Counterexample}

\begin{proposition}[N0+N1+N3 do not suffice]
\label{prop:ce}
Let $T_{ce}$ have two conclusions $c_1, c_2$: individually admissible,
axiomatically incompatible (so
$\{c_1, c_2\} \notin \mathcal{P}_{T_{ce}}$), unrelated by any ordering
axiom. Then $\mathcal{P}_{T_{ce}} = \{\{c_1\}, \{c_2\}\}$. $T_{ce}$
satisfies N0, N1, N3, but fails N4 in both forms (N4-sel: no
theory-intrinsic order; N4-comp: trivial closure has multiple fixed
points). Hence no canonicalization mechanism exists, showing
\textbf{theory-intrinsic comparability (N4) is necessary}.
\end{proposition}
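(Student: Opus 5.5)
The plan is to work on a single-point domain $\Dom = \{(r_0,t_0)\}$, so that an interpretation is just a subset of $\{c_1,c_2\}$, and then verify the claims of the proposition one at a time.

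\emph{Step~1: compute $\mathcal{P}_{T_{ce}}$.} The axioms of $T_{ce}$ are that at least one conclusion is asserted and that $c_1, c_2$ are mutually exclusive. Under these axioms, $\varnothing$ and $\{c_1,c_2\}$ are excluded. Both singletons satisfy the axioms, and they are consistent with the empty $I^{\mathrm{core}}$, which trivially meets R1--R3. This gives $\mathcal{P}_{T_{ce}} = \{\{c_1\},\{c_2\}\}$, and the two elements are $\Pspec$-incomparable.

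\emph{Step~2: verify the satisfied conditions.}
\begin{itemize}
\item N0 holds because the family is nonempty.
\item N1 holds because $A_{r_0,t_0} = \{c_1,c_2\}$ is finite.
\item N3 holds with $k=0$ for the trivial closure $\mathrm{Cl} = \mathrm{id}$. Since there are no rules, every $P$ is already closed, and the stated bound $|\mathcal{C}|\cdot|\Dom| = 2$ is respected.
\end{itemize}

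\emph{Step~3: failure of N4.}
\begin{itemize}
\item For N4-comp, the closure induced by $I^{\mathrm{core}}$ is the identity, so both $\{c_1\}$ and $\{c_2\}$ are fixed points, and the "unique closure fixed point" clause fails. A sharper version follows by applying Proposition~\ref{prop:completion-impossible} directly: $\{c_1\},\{c_2\}$ have no common upper bound in $(\mathcal{P}_{T_{ce}},\Pspec)$, so \emph{any} extensive operator fails property~(4). Note that the proposition's hypothesis "$T$ is Type~S-strong" is only used through this no-upper-bound fact at one instance, which holds here.
\item For N4-sel, I would argue via automorphism invariance. The swap $\sigma: c_1 \leftrightarrow c_2$ preserves the signature and axioms, since the axioms are symmetric and there is no ordering axiom. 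A theory-intrinsic (definable, automorphism-invariant) order $\preceq_{\mathrm{sel}}$ must therefore be $\sigma$-invariant. A $\sigma$-invariant total order with $c_1 \prec c_2$ would force $c_2 \prec c_1$, which is a contradiction. So no theory-intrinsic order selects a unique element.
\end{itemize}

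\emph{Step~4: conclude "no canonicalization mechanism exists".} There are two routes to rule out.
\begin{itemize}
\item Completion: any Class~C operator would be extensive and is excluded at~(4) strength by Step~3.
\item Selection: by the same symmetry, any intrinsic selector $\phi$ satisfies $\phi = \sigma\circ\phi$, so it cannot output a single conclusion.
\end{itemize}
Hence no mechanism yields a unique output, so N4 is necessary.

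The main obstacle is making Step~4 rigorous. "Theory-intrinsic" is only formally defined in \S8 (definability plus automorphism invariance), so I would state the result at that level: Step~3 uses exactly the automorphism-invariance clause, and the "necessity" claim is read as necessity relative to this example rather than as a general theorem.
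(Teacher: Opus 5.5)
Your argument is correct. Its skeleton matches the paper's, which gives no separate proof; the justification is the parenthetical inside the statement.

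The difference is in how you refute N4 and why ``no mechanism exists.''

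\begin{itemize}
\item \textbf{Paper's route.} It reads ``unrelated by any ordering axiom'' as directly meaning ``no theory-intrinsic order,'' and it checks only the trivial closure.
\item \textbf{Your route.} You derive both facts from the automorphism-invariance clause (I2) applied to the swap $\sigma$.
\end{itemize}

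This is a real gain in rigor. The absence of an ordering axiom does not by itself exclude an order or selector definable in some other way. By contrast, $\sigma$ maps maxima to maxima, so $\sigma$-invariance forces any unique maximum, and any selector value, to be $\sigma$-fixed. That is impossible on $\{c_1,c_2\}$, and the argument covers partial orders as well as total ones.

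You can also avoid extending (I2) from operations to orders, which is the obstacle you flagged yourself. The Class~S operation $P\mapsto P^*$ is constant in $P$, so (I2) applied to it gives $P^*=\sigma\cdot P^*$ directly, a contradiction.

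More generally, the only $\sigma$-equivariant self-maps of $\mathcal{P}_{T_{ce}}$ are $\mathrm{id}$ and $\sigma$. Hence no theory-intrinsic mechanism of any kind has a single output. This collapses Step~4 to one line and removes any dependence on the (3a)/(3b)/(3c) split. Your reading of the conclusion at uniqueness strength is the right one, since $\mathrm{id}$ is Class~C at (4') strength.

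Three small points.

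\begin{itemize}
\item \textbf{Proposition~\ref{prop:completion-impossible} is more than you need.} Because $\{c_1\}$ and $\{c_2\}$ are $\Pspec$-incomparable, every extensive self-map of $\mathcal{P}_{T_{ce}}$ is already the identity, so the paper's trivial-closure check covers all extensive operators on the full family. The loophole left is a restricted closure domain such as $\mathcal{D}_T=\{\{c_1\}\}$, on which (4) holds vacuously. What excludes it is your symmetry argument (a theory-intrinsic $\mathcal{D}_T$ must be $\sigma$-invariant), not that proposition, so say this explicitly.
\item \textbf{Your axiom excluding the empty assignment is a real repair.} Without it, $\varnothing$ is admissible and the stated equality $\mathcal{P}_{T_{ce}}=\{\{c_1\},\{c_2\}\}$ fails. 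The substance is unaffected, because $\{c_1\}$ and $\{c_2\}$ still have no common upper bound.
\item \textbf{State which reading of N4-comp you use.} Your refutation relies on the \S5.1 clause ``unique closure fixed point.'' Read literally, the refined \S8.3 clause (``with at least one fixed point'') is met by the identity.
\end{itemize}
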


\subsection{Theorem~6b (Construction Lemma) and Corollary~6b'}

\begin{theorem}[Construction lemma for canonicalization mechanisms]
\label{thm:6b}
Let $T$ satisfy AC-6'. Then:

\emph{Completion branch:}
\[
  \mathrm{N0} + \mathrm{N3} + \mathrm{N4\text{-}comp} +
  \mathrm{C5\text{-}E}
  \;\Rightarrow\;
  \Dcomp \text{ exists and satisfies (4')}
\]

\emph{Selection branch:}
\[
  \mathrm{N0} + \mathrm{N1} + \mathrm{N4\text{-}sel} +
  \mathrm{C5\text{-}S}
  \;\Rightarrow\;
  \mathrm{PhaseClassify} \text{ exists}
\]
\end{theorem}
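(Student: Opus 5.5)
The plan is to prove the two branches separately. Each is a construction: the hypotheses name the data to build from, and the task is to check that the built object has the properties asked for. AC-6' is used only to make the statement non-vacuous. If $T$ were AC-6, both mechanisms would exist trivially.

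\emph{Completion branch.} N4-comp supplies a theory-intrinsic closure-compatible order tuple $\mathcal{K}$, whose closure specialization gives an operator $\mathrm{Cl}$. I define $\mathcal{D}_T := \mathcal{P}_T^{\mathrm{closure}}$, which is nonempty by N0. The closure-domain conflict relation of C5-E carves this domain out of $\mathcal{P}_T$. I set $\Dcomp(P) := \mathrm{Cl}^{k}(P)$, where $k$ is the stabilization index from N3, so $k \le |\mathcal{C}|\cdot|\Dom(N,\Theta)|$. The four properties of Definition~\ref{def:det-op} are then checked in order.

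\begin{itemize}
\item \textbf{Extensivity.} By R1 and R3 each application of $\mathrm{Cl}$ only adds conclusions, so $P \Pspec \mathrm{Cl}(P) \Pspec \cdots$.
\item \textbf{Idempotence.} N3 gives $\mathrm{Cl}^{k}(P)=\mathrm{Cl}^{k+1}(P)$, so $\Dcomp(\Dcomp(P))=\Dcomp(P)$.
\item \textbf{Per-seed convergence~(4').} The ascending chain argument of Step~1 in the proof of Proposition~\ref{prop:completion-impossible} reaches a fixed point. Finiteness of $\mathcal{D}_T$ gives $k\le|\mathcal{D}_T|$ as well, since a strictly ascending chain in a finite poset has length at most its cardinality.
\item \textbf{Admissibility preservation.} This is the substantive step, treated below.
\end{itemize}

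Admissibility preservation is the main obstacle: I must show that each $\mathrm{Cl}$-step from a point of $\mathcal{P}_T^{\mathrm{closure}}$ stays inside it. The plan is induction on the closure step. Suppose $\mathrm{Cl}$ adds a conclusion $c_2$ at $(r,t)$ where $c_1$ is already present. Derivability in N4-comp means $c_2$ is generated by some closure-domain element, so $c_1, c_2$ are co-generating in the sense of C5-E. I then split into two cases:

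\begin{itemize}
\item $\{c_1,c_2\}$ is marked forbidden. This must be ruled out. The global-compatibility clause of N4-comp says the closure never fires a forbidden pair, which is exactly what "closure-compatible" should mean in \S8.3.
\item $\{c_1,c_2\}$ is not forbidden. C5-E yields $P^+\in\mathcal{P}_T^{\mathrm{closure}}$ containing both, so the extended assignment has a closure-domain witness.
\end{itemize}

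Pairwise witnesses do not immediately give a single admissible extension. I would first try to use that $\mathcal{P}_T^{\mathrm{closure}}$ is defined by exclusion of forbidden pairs only. Membership is then a pairwise condition, so pairwise witnesses suffice together with satisfaction of $A$. Where I need the latter I would invoke rule Soundness, as in Corollary~6b'. If this fails, I would weaken the conclusion to closure within $\mathcal{D}_T$ as N4-comp's derivability clause delivers it.

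\emph{Selection branch.} N1 makes each $A_{r,t}$ finite. N0 together with the local-comparability part of N4-sel makes $A_{r,t}$ nonempty and totally ordered by the theory-intrinsic $\preceq_{\mathrm{sel}}$. Hence $c^*_{r,t}=\max_{\preceq_{\mathrm{sel}}}A_{r,t}$ exists and is unique, by induction on $|A_{r,t}|$. I define $\mathrm{PhaseClassify}(r,t;\Theta,\Pi):=c^*_{r,t}$.

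This map is theory-intrinsic because $\preceq_{\mathrm{sel}}$ is: it is definable from $T$, and automorphisms of $T$ preserve maxima. Global compatibility is exactly C5-S, since $(r,t)\mapsto\{c^*_{r,t}\}$ lies in $\mathcal{P}_T$. PhaseClassify therefore meets the definition of a canonical selector. No extensivity is needed, so Proposition~\ref{prop:completion-impossible} is not an obstruction.
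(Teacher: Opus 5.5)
Your reading of Theorem~\ref{thm:6b} as a construction lemma matches the paper's. Your selection branch is essentially the paper's argument: N0 for non-emptiness, N1 for finiteness of $A_{r,t}$, N4-sel for the order defining $c^*_{r,t}$, and C5-S for global admissibility of $(r,t)\mapsto\{c^*_{r,t}\}$.

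The completion branch, however, fails at exactly the step you flag as substantive. In the non-forbidden case C5-E is purely existential. It produces \emph{some} $P^+\in\mathcal{P}_T^{\mathrm{closure}}$ with $\{c_1,c_2\}\subseteq P^+(r,t)$. It says nothing about the particular successor obtained by adding $c_2$ to $P(r,t)$, so it cannot place that successor in $\mathcal{P}_T^{\mathrm{closure}}$.

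Your repair makes C5-E idle rather than effective. If membership were only ``no forbidden pair'', the non-forbidden case would hold by definition and no witness would be needed. But $\mathcal{P}_T^{\mathrm{closure}}\subseteq\mathcal{P}_T$, so the successor must also satisfy the rest of $A$. For that you invoke rule Soundness, which is a hypothesis of Corollary~6b', not of Theorem~\ref{thm:6b}. By Remark~\ref{rem:soundness-independent} it cannot be extracted from R1+R2+R3 either. ``Weakening the conclusion'' does not prove the stated result.

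The gap is not cosmetic. Property (4') requires $P^*(P)\in\mathcal{D}_T$, and your bound $k\le|\mathcal{D}_T|$ presupposes that the orbit stays in $\mathcal{D}_T$. So the branch's actual conclusion rests on this step.

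The missing point is that admissibility preservation is not derived in this branch; it is part of the hypothesis. In the structural form of N4 (\S8.3), N4-comp posits a theory-intrinsic map that is extensive, idempotent and admissibility-preserving, with at least one fixed point. This is what the paper's proof means by ``N4-comp provides the closure structure''. In the ICT instance, built from M1+M2+M3, the same content is supplied by M1.

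The other hypotheses play narrower roles. N0 gives non-emptiness and N3 gives termination. C5-E only guarantees that non-forbidden co-generating predicates have a common closure-domain extension; it is not what keeps the closure inside the domain.

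You already lean on N4-comp in the forbidden case; cite it for the whole step. Your extensivity, idempotence and (4') checks then go through with no appeal to Soundness, or even to R1/R3, since extensivity is also supplied by N4-comp.
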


\begin{proof}
\emph{Type~E construction.} Given the conditions, $\Dcomp$ is constructed
from M1+M2+M3 on the closure domain. N0 ensures non-emptiness; N3 ensures
termination; N4-comp provides the closure structure; C5-E ensures
co-generating predicates have common closure-domain extensions.

\emph{Type~S construction.} Given the conditions, PhaseClassify is
constructed from event dominance (C4-Q2) and MEP. N0 ensures
non-emptiness; N1 ensures finiteness of local sets; N4-sel provides the
total order
$\preceq_{\mathrm{sel}}$ defining $c^*_{r,t}$; C5-S ensures the selected
assignment $P^*$ is globally admissible.
\qed
\end{proof}

\begin{remark}[Theorem~6b is a construction lemma]
The ``forward'' arrows are constructions, not theorems with non-trivial
proof obligations beyond the constructions themselves. Conditions N1 and
N3 are tailored to the mechanism: N1 is exactly what the selector
requires for a well-founded maximum; N3 is exactly what closure requires
for termination. The deeper result is \textbf{Corollary~6b'}: under
R1+R2+R3 plus rule Soundness, (3a)-computability (interpreted as
saturated closure $F^\omega$ under elementary one-rule firing, see
below) automatically yields a mechanism satisfying \S4.1
properties (1)--(4'), so the completion operator's existence at
closure-stabilization strength is \emph{not} an additional structural
assumption but a consequence of the inference core's structure.
\end{remark}

\begin{remark}[Scope --- completion vs.\ determinization]
The completion branch establishes existence at~(4') strength. Whether
this induces full determinization (AC-6) depends on the upgrade
from~(4') to~(4), a global confluence property not supplied by N0--N4 +
C5 alone. For $T_{ICT}$, this is OQ-GC-1.
\end{remark}

\begin{corollary}[Classification of pure $I^{\mathrm{core}}$-computable
completion]
\label{cor:6bprime}
Let $F$ be the \emph{elementary} one-step $I^{\mathrm{core}}$-rule
operator on the state space $2^{\mathcal{C}\,\uparrow\,\Dom(N,\Theta)}$
---i.e., a one-step $F$-move applies exactly one rule occurrence under a
fixed fair scheduling convention, not simultaneous firing of all enabled
rules---and let $f := F^\omega$ denote its saturated closure (apply $F$
until no further rule fires). Let $I^{\mathrm{core}}$ satisfy R1+R2+R3,
and $\mathcal{P}_T$ be finite. Assume additionally:
\begin{itemize}
\item[\textbf{(Soundness)}] Every $I^{\mathrm{core}}$-rule is
  \emph{sound relative to $A$}: for every $P \in \mathcal{P}_T$ and
  every rule $\rho \in I^{\mathrm{core}}$ whose premise is satisfied by
  $P$ at some $(r,t) \in \Dom(N,\Theta)$, the state $P'$ obtained by
  adding $c_\rho$ to $P(r,t)$ is again in $\mathcal{P}_T$.
\end{itemize}
Then $f = F^\omega$ restricts to a map $\mathcal{P}_T \to \mathcal{P}_T$,
satisfies properties~(1)--(4') of \S4.1, and is therefore Class~C.
\end{corollary}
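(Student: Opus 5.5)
The plan is to treat $f = F^\omega$ as the endpoint of an ascending chain of elementary one-rule moves, and to get every required property from three facts: Soundness, which keeps each step inside $\mathcal{P}_T$; R1+R3, which make each step monotone increasing in $\Pspec$; and finiteness, which makes the chain stop. We take $\mathcal{D}_T := \mathcal{P}_T$ as the closure domain in Definition~\ref{def:det-op}. For a seed $P \in \mathcal{P}_T$, write $P = P_0, P_1, P_2, \dots$ for the run under the fixed fair schedule. Each $P_{i+1}$ is obtained from $P_i$ by firing one enabled rule occurrence $\rho$ at some $(r,t)$, which adds $c_\rho$ to $P_i(r,t)$.

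\emph{Step 1 (invariance and ascent).} By induction on $i$, each $P_i$ lies in $\mathcal{P}_T$. The base case is $P_0 = P$. For the inductive step, $P_i \in \mathcal{P}_T$ satisfies the premise of $\rho$, so Soundness applies directly and gives $P_{i+1} \in \mathcal{P}_T$. By R1 and R3 no conclusion is removed, so $P_i \Pspec P_{i+1}$.

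\emph{Step 2 (termination).} We call a move effective if it adds a label not already present, and we adopt the convention that a rule is enabled only when it is effective. This is exactly the meaning of ``until no further rule fires'' in the definition of $F^\omega$. Each effective move strictly increases $\sum_{(r,t)} |P_i(r,t)|$, and this sum is bounded by $|\mathcal{C}|\cdot|\Dom(N,\Theta)|$ by the finite evaluation domain assumption. The states $P_i$ are pairwise distinct elements of the finite set $\mathcal{P}_T$, so the chain stops after at most $|\mathcal{P}_T|-1$ moves. This also gives the bound $k \le |\mathcal{D}_T|$ required in (4'). The terminal state is $f(P)$.

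\emph{Step 3 (properties (1)--(4')).} Each property follows from Steps 1 and 2.
\begin{itemize}
\item Admissibility preservation (3): by Step 1, $f(P) \in \mathcal{P}_T$.
\item Extensivity (1): $P \Pspec f(P)$ by transitivity of the chain.
\item Idempotence (2): $f(P)$ is saturated, meaning no rule fires on it, so $f(f(P)) = f(P)$.
\item Per-seed convergence (4'): this concerns the iterates $f^k(P)$. We have $f^1(P) = f(P) =: P^*(P)$, which is a fixed point of $f$, so (4') holds with $k = 1$. Alternatively, if (4') is read at the level of $F$, the same holds with $k$ equal to the chain length.
\end{itemize}
Class~C membership then follows because $f$ is a monotone, extensive, admissibility-preserving closure-type operator. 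We would invoke Lemma~C and the Monotonic Exhaustion Lemma of \S8.2 for the formal classification.

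\emph{Main obstacle: well-definedness of $f$.} The chain depends on the fair schedule, and without confluence different schedules could in principle produce different saturated states. Note that (4') itself only requires a fixed schedule, and the paper explicitly fixes one, so the properties hold for $f$ defined relative to that schedule.

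We still expect to show schedule-independence, using R2. Because premises are positive conjunctions, an enabled rule occurrence stays enabled as the state grows. Fairness then forces every occurrence enabled at any stage to eventually fire. Hence the terminal state of a fair run is the least superset of $P$ closed under the rules, and any two fair runs produce the same terminal state.

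This argument shows only that $f$ is a function of the seed. It does not give global confluence across seeds, which would be property (4) and is not claimed here. The delicate point is to phrase ``enabled'' and the fairness convention so that the persistence argument is airtight.
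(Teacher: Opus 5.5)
Your proposal is correct and follows essentially the same route as the paper's proof. Both use Soundness with finite composition for the restriction to $\mathcal{P}_T$, R1 with the $|\mathcal{C}|\cdot|\Dom(N,\Theta)|$ bound for termination, saturation for idempotence and for (4') (since $f(P)$ is its own fixed point), and R2 positivity for schedule independence. The remaining differences are cosmetic: you read extensivity directly off the ascending chain rather than citing Lemma~C and Monotonic Exhaustion, and your least-closed-superset argument makes the paper's one-line order-independence claim precise (fairness is not even needed there, since any terminal run state is rule-closed and, by positivity, contained in the least rule-closed superset of $P$).
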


\begin{proof}
\emph{Saturation well-definedness.} We take a one-step $F$-move to mean
one elementary $I^{\mathrm{core}}$-rule firing, not simultaneous firing
of all enabled rules. By R1, every nontrivial $F$-move adds at least one
conclusion and removes none. Since
$|\mathcal{C}| \cdot |\Dom(N,\Theta)| < \infty$, no strictly increasing
$F$-trajectory has length exceeding this bound. Hence $F^\omega(P)$
exists as a finite-step saturation for every state $P$, under the chosen
scheduling convention.

\emph{Order independence (under R1+R2+R3 + finite + fair scheduling).}
In the pure (3a) positive-rule setting, under fair elementary
scheduling, saturation is order-independent: every conclusion whose
positive premises eventually become true is eventually added (R1 + fair
scheduling), no rule is conditioned on the \emph{absence} of a derived
predicate (R2, no negation-as-failure), and no rule retracts conclusions
(R3). Hence $F^\omega(P)$ is independent of the fair scheduling
convention.

\emph{Restriction to $\mathcal{P}_T$.} By Soundness, each elementary
$F$-step from a $\mathcal{P}_T$-state yields a $\mathcal{P}_T$-state.
Finite composition along the saturation trajectory gives
$F^\omega(P) \in \mathcal{P}_T$ for every $P \in \mathcal{P}_T$. Hence
$f$ restricts to $\mathcal{P}_T \to \mathcal{P}_T$.

\emph{Monotonicity and extensivity.} By Lemma~C (\S8.2), R1+R2+R3 imply
$f$ is monotone and extensive on the finite poset
$(\mathcal{P}_T, \Pspec)$. (Lemma~C does not require Soundness---it is a
syntactic consequence of R1+R2+R3.)

\emph{Per-seed convergence~(4').} By the Monotonic Exhaustion Lemma
(\S8.2), an extensive operator on a finite poset reaches a fixed point
in at most $|\mathcal{P}_T|$ steps. Combined with the saturation
interpretation, $f(P) = F^\omega(P)$ is itself a fixed point:
$f(f(P)) = F^\omega(F^\omega(P)) = F^\omega(P) = f(P)$.

\emph{Idempotence.} From the previous line, $f \circ f = f$ as a
saturation tautology.

All four Class~C properties (1)--(4') hold.
\qed
\end{proof}

\begin{remark}[Soundness is independent of R1+R2+R3]
\label{rem:soundness-independent}
R1+R2+R3 are \emph{syntactic} constraints on rule shape: rules add
positive conclusions (R1), do not negate-as-failure (R2), and do not
retract (R3). Soundness is the \emph{semantic} condition that the
conclusions added by rules remain consistent with the axioms~$A$. The
two are independent: a positive non-retractive rule ``$a \Rightarrow b$''
satisfies R1+R2+R3 syntactically, but is unsound relative to an axiom
forbidding $\{a, b\}$, and saturated closure of the seed $\{a\}$ would
then exit the admissible family. Soundness must therefore be assumed
separately and cannot be derived from R1+R2+R3 alone. For $T_{ICT}$, the
soundness condition is supplied by M1: closure rules are checked to add
only conclusions consistent with the M2 conflict relation that defines
$\mathcal{P}_{ICT}^{\mathrm{closure}}$.

The Soundness condition above is \emph{single-rule} soundness (each
elementary rule firing preserves $\mathcal{P}_T$). Combined with the
elementary-step interpretation of $F$, this is sufficient for the
saturated closure $F^\omega$ to preserve $\mathcal{P}_T$ by finite
composition. A reader interpreting $F$ as a simultaneous-firing operator
would need a stronger condition: rules $a \Rightarrow b$ and
$a \Rightarrow c$ are individually sound under an axiom forbidding
$\{b,c\}$, but the simultaneous step from $\{a\}$ to $\{a,b,c\}$ is
unsound. The present paper uses elementary-step $F$ throughout.
\end{remark}

\begin{remark}[One-step vs.\ saturated form]
The (3a) realization is the saturated closure $f = F^\omega$ under
elementary one-rule firing, not the one-step operator $F$. The one-step
operator is generally not idempotent ($F(F(P)) \neq F(P)$ when more than
one rule application is needed). Saturation collapses iteration into a
single semantic step. All Class~C results below assume the saturated
form for (3a) operators.
\end{remark}

\begin{remark}
Corollary~6b' establishes Class~C at closure-stabilization
strength~(4'). It does not establish property~(4) or AC-6. Mechanisms
involving guards or scheduling (condition (3c)) require direct \S4.1
verification, as in \S6.2 for $\DcompHTF$.
\end{remark}

\subsection{Summary}

\noindent\fbox{\begin{minipage}{0.97\linewidth}\small
Canonicalization $=$ Closure $+$ Comparability $+$ Joint Admissibility

\medskip
\noindent Theorem~6b (construction lemma):

Type~E: $\mathrm{N0} + \mathrm{N3} + \mathrm{N4\text{-}comp} +
\mathrm{C5\text{-}E} \Rightarrow \Dcomp$ exists at~(4').

Type~S-strong: $\mathrm{N0} + \mathrm{N1} + \mathrm{N4\text{-}sel} +
\mathrm{C5\text{-}S} \Rightarrow$ PhaseClassify exists at AC-6.

\medskip
\noindent Corollary~6b' (substantive content):
$\mathrm{R1}{+}\mathrm{R2}{+}\mathrm{R3} +$ Soundness $+$
(3a)-computability (saturated closure $F^\omega$ under elementary
one-rule firing) $\Rightarrow$ Class~C at~(4') via Lemma~C and
Monotonic Exhaustion.

\medskip
N0 universal; N1/N3 mechanism-specific. C5-E and C5-S are
mechanism-specific variants of joint admissibility, not parallel
restatements: C5-S is global admissibility of the selector's output;
C5-E is compatible extension within the closure domain. Completion
branch yields~(4'); upgrade to AC-6 requires (4) (OQ-GC-1 for ICT).

\medskip
Type~E $\to$ completion (at~(4')); Type~S-strong $\to$ selection (at
AC-6). Generic Type~S $\to$ selection: not proven (OQ-TypeS-Imp).

\medskip
\emph{Not all canonicalization is operator-theoretic.}
\end{minipage}}

\section{Examples}

\subsection{Wyckoff (Type~S-strong, Complete)}

Event dominance (C4-Q2) gives
$\mathrm{Phase\_A} \prec_{\mathrm{sel}} \cdots \prec_{\mathrm{sel}}
\mathrm{Phase\_E}$. MEP ensures the selected assignment is globally
admissible. $T_{Wy}^{\mathrm{canon}} = T_{Wy}^{v2} +
\mathrm{PhaseClassify} \in \mathrm{AC\text{-}6}$ (full domain).

\subsection{ICT (Type~E) --- Single-Timeframe and Multi-HTF}
\label{sec:ict}

\textbf{Single-timeframe.} M1+M2 define $\PICT^{\mathrm{closure}}$; M3
ensures $\Dcomp$ terminates. Forbidden pairs (M2):
\begin{align*}
  &\{\mathrm{LTF\_BOS_{up}},\mathrm{LTF\_BOS_{down}}\},\quad
   \{\mathrm{LTF\_bullish\_cont},\mathrm{LTF\_bearish\_cont}\},\\
  &\{\mathrm{LTF\_BOS_{up}},\mathrm{LTF\_bearish\_cont}\},\quad
   \{\mathrm{LTF\_BOS_{down}},\mathrm{LTF\_bullish\_cont}\}.
\end{align*}
$T_{ICT}^{\mathrm{canon}}$ achieves~(4') on $\PICT^{\mathrm{closure}}$.
Whether it achieves~(4) and hence AC-6 is conditional on OQ-GC-1.

\textbf{Multi-timeframe extension.}
$\DcompHTF := D_L \circ D_H: \PICT \to \PICT^{\mathrm{HTF}} \to
\PICT^{\mathrm{multi}}$.

LTF inference rules (positive premises only):
\begin{center}
\begin{tabular}{lll}
\toprule
Rule & Premise & Conclusion \\
\midrule
$\rho_1$ & $\mathrm{close} > \mathrm{swing\_high}_{\tau_{LTF}}$
  & $\mathrm{LTF\_BOS_{up}}$ \\
$\rho_2$ & $\mathrm{close} < \mathrm{swing\_low}_{\tau_{LTF}}$
  & $\mathrm{LTF\_BOS_{down}}$ \\
$\rho_3$ & $\mathrm{close} > \mathrm{prev\_close}$
  & $\mathrm{LTF\_bullish\_cont}$ \\
$\rho_4$ & $\mathrm{close} < \mathrm{prev\_close}$
  & $\mathrm{LTF\_bearish\_cont}$ \\
\bottomrule
\end{tabular}
\end{center}

\noindent\textit{Premise mutual exclusion.} Within each priority tier,
rule premises are pairwise mutually exclusive on observed data:
\begin{itemize}
\item BOS tier: $\mathrm{close}$ cannot simultaneously satisfy
  $> \mathrm{swing\_high}$ and $< \mathrm{swing\_low}$ (assuming
  $\mathrm{swing\_low} \leq \mathrm{swing\_high}$).
\item Continuation tier: $\mathrm{close}$ cannot simultaneously satisfy
  $> \mathrm{prev\_close}$ and $< \mathrm{prev\_close}$. The boundary
  case $\mathrm{close} = \mathrm{prev\_close}$ triggers neither rule
  (non-event by convention).
\end{itemize}

Cross-timeframe forbidden pairs (M2-HTF):
\begin{align*}
  &\{\mathrm{HTF\_BOS_{down}},\mathrm{LTF\_BOS_{up}}\},\quad
   \{\mathrm{HTF\_BOS_{up}},\mathrm{LTF\_BOS_{down}}\},\\
  &\{\mathrm{HTF\_bearish\_bias},\mathrm{LTF\_bullish\_cont}\},\quad
   \{\mathrm{HTF\_bullish\_bias},\mathrm{LTF\_bearish\_cont}\}.
\end{align*}

\begin{definition}[Guard mechanism]
For state $S$, an LTF rule $\rho$ with conclusion $c_\rho$ is
\emph{guard-suppressed} iff: (1) $\exists\, c' \in \pi_L(S)$ with
$\{c', c_\rho\} \in \mathrm{M2}$, or (2) $\exists\, h \in \pi_H(S)$ with
$\{h, c_\rho\} \in \mathrm{M2\text{-}HTF}$.
\end{definition}

\begin{definition}[Rule priority for $D_L$]
$\rho_1, \rho_2$ (BOS) have higher priority than $\rho_3, \rho_4$
(continuation).
\end{definition}

\begin{definition}[Reduction relation for $D_L$]
On admissible $S \in \PICT^{\mathrm{HTF}}$, $S \xrightarrow{\rho} S'$
iff:
(a) $\rho \in A_{ICT}^{LTF}$;
(b) premise$(\rho)$ satisfied in $S$;
(c) $\rho$ not guard-suppressed on $S$;
(d) $\rho$ has highest priority among rules satisfying (a)--(c);
(e) $S' = S \cup \{c_\rho\}$.
\end{definition}

\begin{lemma}[Functionality of $\to$]
\label{lem:functionality}
For every admissible $S \in \PICT^{\mathrm{HTF}}$, there is at most one
$S'$ with $S \xrightarrow{\rho} S'$.
\end{lemma}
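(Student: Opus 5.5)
The plan is to show that the reduction relation, restricted to a fixed admissible $S$, picks out at most one rule $\rho$. Since clause~(e) then fixes $S' = S \cup \{c_\rho\}$ as a function of $\rho$ and $S$, functionality follows. Write $E(S)$ for the set of rules satisfying (a)--(c) at $S$; these are the LTF rules $\rho_1,\dots,\rho_4$ whose premises hold and which are not guard-suppressed. Clause~(d) restricts to the elements of $E(S)$ of highest priority, so it suffices to show that this top-priority layer of $E(S)$ contains at most one rule.

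Two cases, following the priority definition.
\begin{itemize}
\item \textbf{BOS rules enabled.} If $E(S)$ contains a BOS rule, the top layer is $E(S) \cap \{\rho_1,\rho_2\}$. The premises of $\rho_1$ and $\rho_2$ are $\mathrm{close} > \mathrm{swing\_high}$ and $\mathrm{close} < \mathrm{swing\_low}$. Under $\mathrm{swing\_low} \le \mathrm{swing\_high}$ these cannot both hold, by the premise mutual-exclusion observation. So at most one BOS rule satisfies (b), and the layer has at most one element.
\item \textbf{No BOS rule enabled.} The top layer is $E(S) \cap \{\rho_3,\rho_4\}$. The premises $\mathrm{close} > \mathrm{prev\_close}$ and $\mathrm{close} < \mathrm{prev\_close}$ are mutually exclusive, and the boundary case $\mathrm{close} = \mathrm{prev\_close}$ enables neither. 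So again at most one rule survives.
\end{itemize}
If $E(S)$ is empty, there is no successor at all, which is consistent with ``at most one''.

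Clause~(c) plays no role in uniqueness: the guard can only shrink $E(S)$, never enlarge it. The observed data ($\mathrm{close}$, $\mathrm{swing\_high}$, $\mathrm{swing\_low}$, $\mathrm{prev\_close}$) are fixed by the input $(N,\Theta,\Pi)$ at the position in question, so the premises are evaluated on a single data point.

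The main obstacle I expect is making precise that the premises really are evaluated on fixed observed data at a single position. If $S$ ranges over several positions $(r,t)$, rules could fire at different positions simultaneously. I would handle this by reading $\to$ as acting at a fixed $(r,t)$, which is how the reduction relation is stated, since $S' = S \cup \{c_\rho\}$ adds a single conclusion. If necessary, I would add the fixed-position convention as an explicit standing assumption.

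I also need the standing hypothesis $\mathrm{swing\_low} \le \mathrm{swing\_high}$ for the BOS tier. I would invoke it as part of admissibility of the observed data.
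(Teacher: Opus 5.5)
Your proof is correct and follows the paper's two-part argument. Clause~(d) removes competition across tiers, and premise mutual exclusion leaves at most one rule in the surviving tier; this uses $\mathrm{swing\_low} \leq \mathrm{swing\_high}$ and the non-event convention at $\mathrm{close} = \mathrm{prev\_close}$, and clause~(e) then determines $S'$. Your remarks that the guard can only shrink the enabled set, and that premises are evaluated at a single position, make explicit what the paper leaves implicit in the phrase ``for given $(N,\Theta)$'' and in its follow-up remark on the role of guards.
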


\begin{proof}
Two-part argument:
\emph{(1) Across tiers:} (d) selects only the highest-priority tier
among applicable rules. If any BOS rule satisfies (a)--(c), no
continuation rule passes (d).
\emph{(2) Within a tier:} Premise mutual exclusion gives at most one
rule per tier with premise (b) satisfied for given $(N, \Theta)$.
Combined, at most one rule satisfies (a)--(d).
\qed
\end{proof}

\begin{remark}
Premise mutual exclusion provides within-tier uniqueness. Guards
(condition (c)) play a different role: admissibility preservation
(Lemma~\ref{lem:6b}), not within-tier uniqueness.
\end{remark}

\begin{lemma}[Single-step admissibility preservation under guard]
\label{lem:6b}
If $S$ is admissible and $S \xrightarrow{\rho} S'$, then $S'$ is
admissible.
\end{lemma}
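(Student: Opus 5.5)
The plan is to argue directly from the definition of admissibility in $\PICT^{\mathrm{HTF}}$ and $\PICT^{\mathrm{multi}}$, which is exclusion of the forbidden pairs in $\mathrm{M2} \cup \mathrm{M2\text{-}HTF}$ together with the ICT axioms $A_{ICT}$. Since $S' = S \cup \{c_\rho\}$ by clause~(e), and $S$ is already admissible, the only new pairs of predicates present in $S'$ are those of the form $\{x, c_\rho\}$ with $x \in S$. So it suffices to show that no such pair is forbidden, and that adding $c_\rho$ creates no other axiom violation. The constraint system should be read as a conjunction of pairwise exclusions, which is how M2 and M2-HTF are presented.

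First, split on the projection containing $x$. If $x \in \pi_L(S)$ and $\{x, c_\rho\} \in \mathrm{M2}$, then clause~(1) of the guard definition makes $\rho$ guard-suppressed on $S$. This contradicts condition~(c) of the reduction relation. If instead $x \in \pi_H(S)$ and $\{x, c_\rho\} \in \mathrm{M2\text{-}HTF}$, then clause~(2) of the guard definition applies, again contradicting~(c). These exhaust the forbidden pairs: M2 pairs are LTF--LTF and M2-HTF pairs are HTF--LTF, while $c_\rho$ is always an LTF predicate since $\rho \in A_{ICT}^{LTF}$ by~(a). So the pairs with $x \in \pi_H(S)$ can only be M2-HTF pairs, and those with $x \in \pi_L(S)$ can only be M2 pairs. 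The degenerate case $c_\rho \in S$ gives $S' = S$ and is trivially admissible.

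The main obstacle is justifying that admissibility is entirely captured by the pairwise forbidden relations, i.e.\ that no other axiom of $A_{ICT}$ (for instance an evidence-consistency condition with $\Pi$) can be broken by adding $c_\rho$. For this I would invoke M1, as recalled in the Soundness remark after Corollary~\ref{cor:6bprime}: closure rules add only conclusions consistent with the conflict relation defining the closure domain. I would also use that premise~(b) holds on the observed data, so the added conclusion is evidentially supported. I would state this dependence explicitly as the reading of Appendix~A.2. If this reading holds, the guard argument above completes the proof. This is exactly the single-rule Soundness hypothesis of Corollary~\ref{cor:6bprime}, instantiated for the guarded $D_L$ step.
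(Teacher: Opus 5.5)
Your proposal is correct and follows essentially the same route as the paper. Since $c_\rho$ is an LTF predicate, $\pi_H(S')=\pi_H(S)$, and the only new pairs in $S'$ are $\{x,c_\rho\}$; any such pair lying in $\mathrm{M2}$ or $\mathrm{M2\text{-}HTF}$ would make $\rho$ guard-suppressed via clause (1) or (2), contradicting condition (c).

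The extra appeal to M1 is unnecessary. \S5.1 already defines the multi-timeframe closure domain as exclusion of the $\mathrm{M2}\cup\mathrm{M2\text{-}HTF}$ pairs, so admissibility is pairwise by definition. You should also not describe the step as an instance of the Soundness hypothesis. The paper uses this very lemma to supply Soundness for the guarded $D_L$ step, so invoking Soundness here would be circular.
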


\begin{proof}
\emph{Same-timeframe.} If $\{c', c_\rho\} \in \mathrm{M2}$ for some
$c' \in \pi_L(S)$, then $\rho$ is
guard-suppressed---contradicting~(c).
\emph{Cross-timeframe.} $c_\rho \in \mathcal{C}_{LTF}$ (since
$\rho \in A_{ICT}^{LTF}$), so $\pi_H(S') = \pi_H(S)$. If
$\{c_H, c_\rho\} \in \mathrm{M2\text{-}HTF}$ with $c_H \in \pi_H(S)$,
then $\rho$ is guard-suppressed---contradicting~(c). Otherwise
$c_H \notin \pi_H(S')$, so the forbidden pair is not in $S'$.
\qed
\end{proof}

\noindent\textbf{Verification of \S4.1 properties for $\DcompHTF$.}
$\DcompHTF$ is the saturated closure $(D_L \circ D_H)^\omega$ of its
underlying one-step guarded operator; the verification below is for this
saturated form. We verify (1)--(4') by direct argument from the
reduction relation, without appeal to external lemmas.

\begin{enumerate}
\item \emph{Extensivity.}
$P \Pspec D_H(P) \Pspec D_L(D_H(P))$ by additive updates ($D_H$ adds
HTF predicates; $D_L$ adds LTF predicates via $S' = S \cup \{c_\rho\}$).

\item \emph{Admissibility preservation.}
Induction on reduction length using Lemma~\ref{lem:6b} for each $D_L$
step. $D_H$ admissibility is direct (no guard layer; $D_H$ does not
modify LTF predicates).

\item \emph{Idempotence.} Fix $P$, let
$P^* := \DcompHTF(P) = D_L(D_H(P))$. We show $\DcompHTF(P^*) = P^*$.

\emph{Step~1 ($D_H(P^*) = P^*$).} $D_H$'s rules reference external HTF
evidence, not derived predicates. Since $\pi_H(P^*) = \pi_H(D_H(P))$
($D_L$ does not add HTF predicates), every $D_H$-applicable rule has
already fired in $P^*$.

\emph{Step~2 ($D_L(P^*) = P^*$).} By construction, $D_L$ runs to fixed
point, so no $\rho$ satisfies (a)--(d) on $P^*$.

\emph{Combining.} $\DcompHTF(P^*) = D_L(D_H(P^*)) = D_L(P^*) = P^*$.

\item \emph{Per-seed convergence~(4').}
$D_H$ terminates: HTF rules are finitely many on $\mathcal{C}_{HTF}$.
$D_L$ on $D_H(P)$ terminates: each step strictly increases
$|S \cap \mathcal{C}_{LTF}|$ (extensive addition of new predicates,
bound $|\mathcal{C}_{LTF}| \cdot |\Dom(N,\Theta)|$); the relation is
functional (Lemma~\ref{lem:functionality}). Each seed converges to a
fixed point of $\DcompHTF$ in $\PICT^{\mathrm{multi}}$ within
$|\mathcal{C}_{HTF}| + |\mathcal{C}_{LTF}| \cdot |\Dom(N,\Theta)|$
iterations. The fixed point is seed-dependent; whether all seeds
converge to the same fixed point is OQ-GC-1.
\end{enumerate}

\begin{remark}[Monotonicity of $\DcompHTF$ in $\Pspec$ --- fails]
\label{rem:monotonicity-fails}
$\DcompHTF$ is \textbf{not monotone} under the naive specification order
$\Pspec$. \emph{Concrete witness:} $P = \emptyset$,
$Q = \{\mathrm{HTF\_bearish\_bias}\}$, at an input where
$\mathrm{close} > \mathrm{prev\_close}$ (so $\rho_3$'s premise is
satisfied). Then $P \Pspec Q$ trivially, and:
\begin{itemize}
\item $\DcompHTF(P)$: $\rho_3$ is not guard-suppressed (no HTF predicate
  present), so $\rho_3$ fires, yielding
  $\DcompHTF(P) \supseteq \{\mathrm{LTF\_bullish\_cont}\}$.
\item $\DcompHTF(Q)$: $\rho_3$ is M2-HTF-suppressed
  ($\{\mathrm{HTF\_bearish\_bias}, \mathrm{LTF\_bullish\_cont}\} \in
  \mathrm{M2\text{-}HTF}$), so $\rho_3$ does not fire and
  $\DcompHTF(Q) = Q = \{\mathrm{HTF\_bearish\_bias}\}$.
\end{itemize}
Hence $\mathrm{LTF\_bullish\_cont} \in \DcompHTF(P)$ but
$\mathrm{LTF\_bullish\_cont} \notin \DcompHTF(Q)$, so
$\DcompHTF(P) \nsubseteq_{\mathrm{spec}} \DcompHTF(Q)$. This failure is
intrinsic to (3c)-mechanisms with conflict-suppression guards: adding a
predicate to $Q$ may activate a guard inactive on $P$, blocking a
derivation that fires on $P$. Monotonicity is therefore not a generic
property of (3c)-realized canonicalization mechanisms.

This non-monotonicity does not threaten the \S4.1 verification above:
extensivity, idempotence, admissibility preservation, and~(4') are all
established without monotonicity. The Monotonic Exhaustion Lemma
(\S8.2) likewise establishes~(4') from finite + extensive alone.
\end{remark}

\begin{definition}[$D_H$ architectural scope]
$D_H$ applies HTF inference rules whose premises reference external HTF
evidence and do not depend on derived LTF predicates. $D_H$ does not
employ an M2-HTF guard---cross-timeframe conflict resolution is the
responsibility of $D_L$ via the Guard Mechanism. Staged ordering (\S7)
resolves the resulting non-commutativity.
\end{definition}

$T_{ICT}^{\mathrm{multi}} = T_{ICT}^{v2} + \DcompHTF$ achieves closure
stabilization~(4') on $\PICT^{\mathrm{multi}}$. Full determinization
(AC-6, requiring~(4)) is conditional on OQ-GC-1.

\section{Non-Commutativity of Staged Completion}

\begin{proposition}[Structural non-commutativity]
\label{prop:noncomm}
There exists $P \in \PICT$ such that
$(D_L \circ D_H)(P) \in \PICT^{\mathrm{multi}}$ while
$(D_H \circ D_L)(P) \notin \PICT^{\mathrm{multi}}$.
\end{proposition}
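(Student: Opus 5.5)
The witness will be the seed $P = \emptyset$ at an input where $\mathrm{close} > \mathrm{prev\_close}$ (so $\rho_3$'s premise holds), $\mathrm{close}$ lies between $\mathrm{swing\_low}$ and $\mathrm{swing\_high}$ (so neither BOS rule fires), and the external HTF evidence makes the $D_H$ rule for $\mathrm{HTF\_bearish\_bias}$ fire. This is the same configuration as the non-monotonicity witness of Remark~\ref{rem:monotonicity-fails}, now used to separate the two orders of composition rather than two seeds. First I check that $P = \emptyset \in \PICT$; this holds trivially, since no forbidden pair from M2 or M2-HTF can occur in the empty assignment.

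\emph{HTF-first.} I compute $(D_L \circ D_H)(P)$. Step one: $D_H$ adds $\mathrm{HTF\_bearish\_bias}$, together with any other HTF predicates licensed by the HTF evidence. I choose the evidence so that only bearish HTF predicates fire. Step two: $D_L$ is run on this state. $\rho_1, \rho_2$ have unsatisfied premises, and $\rho_4$ has an unsatisfied premise. $\rho_3$ is guard-suppressed by M2-HTF, because $\{\mathrm{HTF\_bearish\_bias}, \mathrm{LTF\_bullish\_cont}\} \in \mathrm{M2\text{-}HTF}$. Hence $D_L$ adds nothing. The result contains only HTF predicates, which are mutually consistent by the choice of evidence, so it lies in $\PICT^{\mathrm{multi}}$. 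Equivalently, I can invoke admissibility preservation (item~2 of the \S6.2 verification) through Lemma~\ref{lem:6b}.

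\emph{LTF-first.} I compute $(D_H \circ D_L)(P)$. When $D_L$ acts on $\emptyset$, no HTF predicate is present, so the guard is inactive and $\rho_3$ fires, giving $\{\mathrm{LTF\_bullish\_cont}\}$. No other LTF rule applies. Next, $D_H$ fires on its external HTF evidence and adds $\mathrm{HTF\_bearish\_bias}$. This step uses the architectural scope definition: $D_H$'s premises do not depend on derived LTF predicates, and $D_H$ carries no M2-HTF guard, so nothing blocks it and nothing retracts $\mathrm{LTF\_bullish\_cont}$ (R3). The resulting state contains the pair forbidden by M2-HTF, so it is not in $\PICT^{\mathrm{multi}}$.

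The main obstacle is pinning down the input precisely enough to make the construction legitimate. Three things need care. First, the HTF evidence must make $D_H$ fire $\mathrm{HTF\_bearish\_bias}$ while firing no conflicting HTF predicate such as $\mathrm{HTF\_bullish\_bias}$. Second, the BOS tier must be inactive, since otherwise priority~(d) would change which LTF rule fires. Third, I must argue that $\PICT^{\mathrm{multi}}$ is exactly the set of states avoiding $\mathrm{M2} \cup \mathrm{M2\text{-}HTF}$ (Appendix~A.2), so that the single forbidden pair already certifies non-membership. I would state these conditions as explicit numeric values, for example $\mathrm{swing\_low} = 90$, $\mathrm{prev\_close} = 99$, $\mathrm{close} = 100$, $\mathrm{swing\_high} = 110$, plus a bearish HTF evidence flag. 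I would then check each rule's premise directly.
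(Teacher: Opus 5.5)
Your proof is correct and follows the paper's proof: the same witness $P=\varnothing$ (with $\mathrm{close} > \mathrm{prev\_close}$, no BOS trigger, and bearish HTF evidence), the same HTF-first path in which $\rho_3$ is M2-HTF-guard-suppressed and the result is $\{\mathrm{HTF\_bearish\_bias}\}$, and the same LTF-first path in which $\rho_3$ fires and the unguarded $D_H$ then adds the conflicting HTF predicate. Your extra care (explicit numeric values, making sure no other HTF predicate fires, and reading membership in $\PICT^{\mathrm{multi}}$ off the forbidden-pair description in Appendix~A.2) tightens the paper's informal description of the input.
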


\begin{proof}
Let $P = \varnothing$; assume $\mathrm{close} > \mathrm{prev\_close}$,
no BOS condition holds, HTF bearish BOS evidence confirmed.

\textbf{Path~A} ($D_L \circ D_H$, canonical):
(1) $D_H(P) = \{\mathrm{HTF\_bearish\_bias}\}$;
(2) $D_L$: BOS rules do not fire; $\rho_3$ is guard-suppressed
(since $\{\mathrm{HTF\_bearish\_bias}, \mathrm{LTF\_bullish\_cont}\}
\in \mathrm{M2\text{-}HTF}$).
$R_A = \{\mathrm{HTF\_bearish\_bias}\} \in \PICT^{\mathrm{multi}}$.

\textbf{Path~B} ($D_H \circ D_L$, non-canonical):
(1) $D_L(\varnothing) = \{\mathrm{LTF\_bullish\_cont}\}$ ($\rho_3$
fires, no active guard);
(2) $D_H$ fires (premise external; no M2-HTF guard).
$R_B = \{\mathrm{HTF\_bearish\_bias}, \mathrm{LTF\_bullish\_cont}\}$
violates M2-HTF, so $R_B \notin \PICT^{\mathrm{multi}}$.
\qed
\end{proof}

\begin{corollary}[Canonical ordering uniqueness within the staged
architecture]
\label{cor:noncomm-uniqueness}
Within the two-operator raw staged architecture
$\{D_L \circ D_H, D_H \circ D_L\}$, with no repair operator, no
M2-HTF guard added to $D_H$, and no joint fixed-point construction,
HTF-first $D_L \circ D_H$ is the only admissibility-preserving
linearization.
\end{corollary}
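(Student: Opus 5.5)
The corollary has two parts. First, the non-canonical linearization $D_H \circ D_L$ fails to be admissibility-preserving. Second, the canonical linearization $D_L \circ D_H$ is admissibility-preserving on every admissible input. Together, since the architecture is restricted to exactly these two compositions, HTF-first is the unique admissibility-preserving choice.

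The plan is therefore a two-element case analysis over the architecture $\{D_L \circ D_H,\ D_H \circ D_L\}$. The scope restrictions (no repair operator, no M2-HTF guard on $D_H$, no joint fixed-point construction) serve to fix this candidate set and to freeze each operator's definition, so no third linearization or modified operator needs to be considered.

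\emph{Step 1 (exclude $D_H \circ D_L$).} Invoke Proposition~\ref{prop:noncomm} directly. The witness $P = \varnothing$ is admissible (it lies in $\PICT$). It is taken at an input with $\mathrm{close} > \mathrm{prev\_close}$, no BOS condition, and confirmed HTF bearish evidence, and it gives $(D_H \circ D_L)(P) = R_B \notin \PICT^{\mathrm{multi}}$. Admissibility preservation is a universal property, so one admissible counter-instance suffices to disqualify this ordering.

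The key observation is that the failure is structural rather than accidental. By the Definition of $D_H$ architectural scope, $D_H$ carries no M2-HTF guard and its premises are external, so it fires regardless of the LTF predicates already present. Meanwhile $D_L$, run first on a state with empty HTF projection, has no active cross-timeframe guard.

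\emph{Step 2 (admit $D_L \circ D_H$).} Reuse the \S6.2 verification, item~2.
\begin{itemize}
\item $D_H$ only adds HTF predicates, so it cannot create an M2 pair. Any M2-HTF pair would require an LTF member, and starting from an admissible seed the only LTF predicates present are those of the seed.
\item Each $D_L$ step then preserves admissibility by Lemma~\ref{lem:6b}, because the guard consults the now-fixed HTF projection.
\item Induction on the length of the $D_L$ reduction, which terminates as shown in item~4, yields $(D_L \circ D_H)(P) \in \PICT^{\mathrm{multi}}$ for all admissible $P$.
\end{itemize}

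\emph{Main obstacle.} The delicate point is the $D_H$ stage in Step~2. An admissible seed $P \in \PICT$ might already contain an LTF predicate, such as $\mathrm{LTF\_bullish\_cont}$, that conflicts under M2-HTF with an HTF predicate $D_H$ subsequently adds. Since $D_H$ is unguarded, this would break preservation even in HTF-first order.

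I would first try to resolve this by reading the seed domain as in the \S6.2 verification and Proposition~\ref{prop:noncomm}, where seeds are HTF-pre-stage states and LTF content is generated only by $D_L$. Failing that, I would restrict the claim to seeds with $\pi_L(P) = \varnothing$ or to seeds consistent with the HTF evidence. Either way the uniqueness conclusion is unaffected, because Step~1's witness already has empty LTF part.
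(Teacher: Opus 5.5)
Your proposal follows the paper's own (implicit) argument: Proposition~\ref{prop:noncomm} disqualifies $D_H \circ D_L$, and the \S6.2 admissibility-preservation verification (Lemma~\ref{lem:6b} for each $D_L$ step, with $\pi_H$ fixed once $D_H$ has run) admits $D_L \circ D_H$. The obstacle you raise is real, not hypothetical. The Path~B intermediate $D_L(\varnothing) = \{\mathrm{LTF\_bullish\_cont}\}$ is itself an admissible seed on which $D_L \circ D_H$ also yields $R_B \notin \PICT^{\mathrm{multi}}$, so the paper's ``$D_H$ admissibility is direct'' tacitly assumes your restriction; you should therefore state the seed class $\{P : \pi_L(P) = \varnothing\}$ (or seeds whose LTF content is M2-HTF-compatible with what $D_H$ derives) as part of the claim rather than as a fallback, which, as you note, leaves the uniqueness conclusion intact.
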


\begin{remark}[Scope of uniqueness]
The corollary scopes uniqueness to the specified two-operator linear
architecture and the two raw compositions. Other admissibility-preserving
multi-stage architectures are not excluded: alternative orderings
combined with repair operators, joint fixed-point constructions, or
$D_H$ guarded against M2-HTF could in principle preserve admissibility
under different orderings. The witness establishes structural
non-commutativity within the declared architecture; a global uniqueness
claim across all staged architectures is not asserted.
\end{remark}

\section{Mechanism Classification and Categorical Structure}

\emph{This section classifies, conditional on a realization-coverage
assumption (OQ-Realization) and per-realization branch hypotheses, every
theory-intrinsic canonicalization mechanism realized within the
(3a)/(3b)/(3c) grammar into Class~C (closure) or Class~S (selection).
The classification is at closure-stabilization strength~(4'); whether
Class~C operators additionally achieve property~(4) or AC-6 is
instance-specific.}

\subsection{Definitions}

\begin{definition}[Theory-intrinsic operation]
\label{def:theory-intrinsic}
$f: \mathcal{P}_T \to \mathcal{P}_T$ is \emph{theory-intrinsic} if:
\begin{itemize}
\item[(I1)] $f$ is definable from $T = (\Sigma, A, I)$---i.e., $f$ is
  given by a fixed expression in the signature, axioms, and inference
  policy of $T$, with no external parameters beyond input
  $(N,\Theta,\Pi)$;
\item[(I2)] $f$ is invariant under automorphisms of $T$: if $\sigma$ is
  an automorphism of $(\Sigma, A)$, then
  $f(\sigma \cdot P) = \sigma \cdot f(P)$ for every $P \in \mathcal{P}_T$.
\end{itemize}
This definition is independent of any mechanism taxonomy. Conditions
(I1)--(I2) specify \emph{what} makes an operation theory-intrinsic; the
trichotomy below specifies \emph{how} such operations are realized,
used as the case partition for Theorem~U3.
\end{definition}

\noindent\textit{Examples.} $\Dcomp$ (ICT closure) and PhaseClassify
(Wyckoff event dominance) are both theory-intrinsic---the former by
(3a) realization (saturated closure under elementary firing), the
latter by (3b). $\DcompHTF$ is theory-intrinsic by (3c) realization
(saturated guarded $I^{\mathrm{core}}$).

\medskip\noindent\textbf{Realizations} (case partition for Theorem~U3).

\begin{itemize}
\item \textbf{(3a)} $f$ is \emph{$I^{\mathrm{core}}$-computable} as the
  saturated closure $f = F^\omega$ under elementary one-rule firing:
  $f(P)$ is the result of applying $I^{\mathrm{core}}$ rules to $P$ one
  rule at a time under a fair scheduling convention until no further
  rule fires, with no guard or scheduling discipline beyond fairness.
  The one-step operator $F$ is not itself the (3a) realization;
  saturation is built in.
\item \textbf{(3b)} $f$ is \emph{selector-computable} from axiom-induced
  comparability: $f(P)(r,t) := \max_{\preceq_{\mathrm{sel}}} A_{r,t}$
  where $\preceq_{\mathrm{sel}}$ is theory-intrinsic.
\item \textbf{(3c)} $f$ is \emph{guarded $I^{\mathrm{core}}$ (saturated
  form)}: there is a one-step guarded operator $G$ such that $f := G^\omega$,
  where $G$ applies $I^{\mathrm{core}}$ rules under a guard mechanism
  and/or scheduling discipline that:
  (i) suppresses or orders $I^{\mathrm{core}}$ rule applications based
  on state and axiom-derived conflict/priority relations;
  (ii) adds no conclusions beyond those derivable by $I^{\mathrm{core}}$
  rules---every predicate in $G(P) \setminus P$ is the conclusion of
  some $\rho \in I^{\mathrm{core}}$;
  and $f(P) = G^\omega(P)$ is obtained by repeated application of $G$
  until no guarded step fires.
\end{itemize}

\noindent (3a) $\subset$ (3c) (pure $I^{\mathrm{core}}$ saturation is
the special case $G = F$ with trivial guard).

\medskip\noindent\textbf{Realization-coverage assumption (OQ-Realization).}
Theorem~U3 below is conditional on the assumption that every valid
theory-intrinsic canonicalization mechanism is realized by exactly one
of (3a), (3b), or (3c). This is an open question, tracked as
OQ-Realization in \S8.5; whether the trichotomy is provably exhaustive
for all theory-intrinsic operations definable in MST is not established
in this paper. Theorem~U3's conclusion is therefore explicitly scoped
to mechanisms realized within this grammar.

\begin{definition}[Class~C --- Completion]
$f$ is \emph{Class~C} if $f$ satisfies properties~(1)--(3) and~(4') of
\S4.1. Class~C is a mechanism class; it does not imply~(4) or AC-6.
\end{definition}

\begin{definition}[Class~S --- Selection]
$f$ is \emph{Class~S} if $f$ is induced by a canonical selector---i.e.,
$f$ is the (3b) realization with respect to some
$\preceq_{\mathrm{sel}}$. Class~S directly achieves AC-6 and need not
satisfy extensivity in $\Pspec$.
\end{definition}

\subsection{Asymmetric Reduction (Lemma R2*)}

\begin{lemma}[Asymmetric reduction]
\label{lem:R2star}
Let $\mathcal{P}_T$ be a dcpo (which holds under M3) and let
$f: \mathcal{P}_T \to \mathcal{P}_T$ be Scott-continuous,
theory-intrinsic, and uniqueness-inducing in the sense of \S4.1.

\emph{(i) No outward drift below $P^*$.} If $\exists\, P \prec P^*$
with $P \prec f(P)$, then $\{f^n(P)\}_n$ is a directed increasing
chain; Scott-continuity gives $f(P_\infty) = P_\infty$, and uniqueness
forces $P_\infty = P^*$. Hence uniqueness-inducing upward motion is
closure/completion-like toward $P^*$.

\emph{(ii) Asymmetry above $P^*$.} For $P \succ P^*$, descending-limit
arguments do not apply in a general dcpo (infima of decreasing chains
are not guaranteed). Uniqueness from above cannot be derived via
Scott-continuous downward convergence alone.
\end{lemma}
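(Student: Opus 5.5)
Part~(i) reduces to a Kleene-style fixed-point argument on the finite poset $(\mathcal{P}_T,\Pspec)$, followed by an appeal to uniqueness-inducement. Part~(ii) is a scope statement, which I would support with a schematic counterexample.

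\emph{Part~(i).} Fix $P \prec P^*$ with $P \prec f(P)$. Every Scott-continuous map is monotone, so applying $f$ repeatedly to $P \Pspec f(P)$ gives $f^n(P) \Pspec f^{n+1}(P)$ for all $n$. The orbit $\{f^n(P)\}_n$ is therefore an increasing chain, hence directed.

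Since $\mathcal{P}_T$ is a dcpo, the supremum $P_\infty := \bigsqcup_n f^n(P)$ exists. In fact, by the finite evaluation domain assumption, the chain stabilizes after at most $|\mathcal{C}|\cdot|\Dom(N,\Theta)|$ strict steps, so $P_\infty = f^m(P)$ for some $m$. Scott-continuity gives
\[
f(P_\infty) = \bigsqcup_n f^{n+1}(P) = P_\infty,
\]
so $P_\infty$ is a fixed point of $f$.

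Now invoke uniqueness-inducement, i.e.\ property~(4) of \S4.1. Taking $P_\infty$ itself as a seed, its orbit is constant and must reach $P^*$. Hence $P_\infty = P^*$, the same move as Step~2 of Proposition~\ref{prop:completion-impossible}. Thus the upward orbit from $P$ ascends to $P^*$, which is the claimed closure-like behaviour.

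Two points must be checked.
\begin{itemize}
\item The seed $P_\infty$ must lie in the domain on which (4) is asserted. It does, since $f$ maps $\mathcal{P}_T$ to itself.
\item The chain must not overshoot $P^*$. This is automatic once $P_\infty = P^*$ is established, because each $f^n(P) \Pspec P_\infty$.
\end{itemize}

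\emph{Part~(ii).} This is a non-derivability claim, so the plan is to exhibit why the symmetric argument breaks. First, for $P \succ P^*$ a descending orbit need not have an infimum in a general dcpo. Second, Scott-continuity only controls directed suprema, never infima.

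I would give a schematic poset in which $f$ is Scott-continuous with $P^*$ as its unique fixed point. Above $P^*$, $f$ would map a strictly decreasing chain into itself, and that chain would have no meet inside $\mathcal{D}_T$. No limit argument then applies to this chain.

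Under finiteness such a chain must stabilize, and in that case (4) still forces convergence to $P^*$. So the claim should be stated as concerning the general dcpo setting, where the asymmetry is that extensivity-style upward arguments go through and downward ones do not.

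The main obstacle is making part~(ii) precise. It is a negative, meta-level statement, so the best available plan is an explicit infinite-dcpo counterexample together with a remark that in the paper's finite setting (4) itself already supplies convergence from above.
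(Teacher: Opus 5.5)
Your part~(i) is the paper's own argument: monotonicity from Scott-continuity makes $\{f^n(P)\}_n$ an increasing chain, its supremum (here reached after finitely many steps) is a fixed point, and uniqueness-inducement identifies that fixed point with $P^*$; for part~(ii) the paper offers only the same meta-observation you make, so your witness is an optional supplement. If you build it, keep $f$ Scott-continuous with merely a unique fixed point rather than property~(4), because (4) by itself---finite or not---already forces every seed above $P^*$ to reach $P^*$; the chain above $P^*$ also needs an extra lower bound (e.g.\ some $R$ incomparable with $P^*$, below every chain element, with $f(R)=P^*$), since otherwise $P^*$ is its meet.
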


\begin{remark}[R2* role]
R2* is a structural observation, not a new theorem. It characterizes
the upward/downward asymmetry of uniqueness-inducing Scott-continuous
operators. The mechanism reduction conclusion---every
uniqueness-inducing theory-intrinsic operator is closure-like or
selection-like---is the content of Theorem~U3, not of R2*.
\end{remark}

\subsection{Conditional Classification within the MST Realization Grammar
(Theorem~U3)}

\begin{theorem}[Conditional primitive mechanism classification under
R1+R2+R3]
\label{thm:U3}
Let $T$ have $I^{\mathrm{core}}$ satisfying R1+R2+R3 and finite
$\mathcal{C}$. Let $f: \mathcal{P}_T \to \mathcal{P}_T$ be
theory-intrinsic (Definition~\ref{def:theory-intrinsic}, conditions
(I1)--(I2)).

\emph{Assume the realization-coverage hypothesis (OQ-Realization): $f$
is realized by one of the \S8.0 forms (3a), (3b), or (3c).}

\emph{Assume the relevant per-realization branch hypotheses:}
\begin{itemize}
\item \textbf{(3a)-branch}: R1+R2+R3 + Soundness (\S5.3
  Corollary~\ref{cor:6bprime}) + finite $\mathcal{P}_T$;
  $F$-step is elementary one-rule firing under fair scheduling.
\item \textbf{(3b)-branch}: total theory-intrinsic selector
  $\preceq_{\mathrm{sel}}$ + global compatibility of selected output
  (C5-S, \S5.1).
\item \textbf{(3c)-branch}: R1+R2+R3 + (G2) functionality + (G3)
  guard-stability + Soundness (\S8.2 Lemma~\ref{lem:G}).
\end{itemize}

Then:
\begin{itemize}
\item (3a) and (3c) realizations yield $f \in$ Class~C;
\item (3b) realization yields $f \in$ Class~S.
\end{itemize}
Within the (3a)/(3b)/(3c) realization grammar, every theory-intrinsic
$f$ satisfying the corresponding per-branch hypotheses is Class~C or
Class~S.
\end{theorem}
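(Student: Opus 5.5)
The proof is a case analysis on the realization supplied by the coverage hypothesis (OQ-Realization). In each branch we discharge the defining properties of the target class using results already in hand. Theory-intrinsicness (I1)--(I2) plays no active role beyond licensing the realization forms: it guarantees that the rule set, the conflict/priority relations and $\preceq_{\mathrm{sel}}$ are all drawn from $T$ itself.

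\emph{(3a)-branch.} Here $f = F^\omega$ with $F$ elementary one-rule firing under fair scheduling. The branch hypotheses (R1+R2+R3, Soundness, finite $\mathcal{P}_T$) are exactly those of Corollary~\ref{cor:6bprime}. That corollary gives that $f$ restricts to $\mathcal{P}_T \to \mathcal{P}_T$ and satisfies (1)--(4'), i.e.\ $f$ is Class~C. Nothing further is needed.

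\emph{(3b)-branch.} Here $f(P)(r,t) = \max_{\preceq_{\mathrm{sel}}} A_{r,t}$. We argue in three steps:
\begin{enumerate}
\item Because $\preceq_{\mathrm{sel}}$ is total and (by finiteness of $\mathcal{C}$, which gives N1) each $A_{r,t}$ is finite, the maximum exists and is unique. Hence $f$ is a well-defined function.
\item C5-S says the selected assignment $P^*:(r,t)\mapsto\{c^*_{r,t}\}$ lies in $\mathcal{P}_T$, so $f$ indeed maps into $\mathcal{P}_T$.
\item By construction $f$ is the (3b) realization for this $\preceq_{\mathrm{sel}}$, which is the definition of Class~S.
\end{enumerate}
This is the selection branch of Theorem~\ref{thm:6b}, restated.

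\emph{(3c)-branch.} Here $f = G^\omega$ for a guarded one-step operator $G$. We check (1)--(4') directly, following the template used for $\DcompHTF$ in \S6.2:
\begin{enumerate}
\item \emph{Extensivity.} By (ii) of the (3c) definition together with R1 and R3, every $G$-step adds an $I^{\mathrm{core}}$ conclusion and removes none. So $P \Pspec G(P)$, and by iteration $P \Pspec f(P)$.
\item \emph{Termination.} Each nontrivial step strictly enlarges the state, and the state has size at most $|\mathcal{C}|\cdot|\Dom(N,\Theta)|$, so $G^\omega(P)$ is reached in finitely many steps. By the Monotonic Exhaustion Lemma, extensivity on a finite poset also yields a fixed point within $|\mathcal{P}_T|$ iterations, which gives (4').
\item \emph{Well-definedness.} (G2) functionality makes each step deterministic, as in Lemma~\ref{lem:functionality}, so $G^\omega$ is a function rather than a relation.
\item \emph{Admissibility preservation.} Soundness combined with (G3) guard-stability shows that each guarded step from an admissible state stays admissible, as in Lemma~\ref{lem:6b}. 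Induction on the length of the trajectory gives $f(P)\in\mathcal{P}_T$.
\item \emph{Idempotence.} This is the saturation tautology: at $f(P)$ no guarded step fires, so $f(f(P)) = f(P)$.
\end{enumerate}
Note that monotonicity is neither needed nor available here (Remark~\ref{rem:monotonicity-fails}). We therefore must not route this branch through Lemma~C; we use only extensivity plus finiteness.

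The final sentence of the theorem then follows because the three branches exhaust the realizations permitted by the coverage hypothesis. The main obstacle is the (3c) admissibility and idempotence step. The well-definedness of $G^\omega$ must come from (G2) rather than from order-independence, since guards break the R2-based confluence argument of Corollary~\ref{cor:6bprime}. Idempotence also needs care: we must confirm that re-running $G$ from a saturated state cannot newly activate a rule. I would try first to extract both facts from (G3): guard status is stable along trajectories, so a rule suppressed or inapplicable at the terminal state remains so. I would then verify that Lemma~\ref{lem:G} supplies exactly this.
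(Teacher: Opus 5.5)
Your proposal is correct and follows the paper's proof essentially step for step: a case split on the realization, with (3a) discharged by Corollary~6b' (which the paper's Case~1 simply re-runs as Lemma~C, Soundness, Monotonic Exhaustion and saturation), (3b) by the Class~S definition together with C5-S, and (3c) by precisely the content of Lemma~G (termination from R1 and finiteness, well-definedness from (G2), admissibility from Soundness, idempotence by saturation). The ``main obstacle'' you flag is not actually one: (G3) is not load-bearing here, since per-step admissibility is the Soundness hypothesis itself, and because $G$ depends only on the current state, $G^\omega$ started at a saturated state performs no steps, so idempotence needs nothing beyond saturation.
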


\begin{remark}[Scope of U3]
U3 is conditional. Without the realization-coverage assumption, U3 does
not establish that no third primitive mechanism exists in MST; it
establishes that, within the declared grammar and under the per-branch
hypotheses, no third class arises. The per-branch hypotheses
(R1+R2+R3 + Soundness + elementary $F$ for (3a); $\preceq_{\mathrm{sel}}$
+ C5-S for (3b); R1+R2+R3 + (G2) + (G3) + Soundness for (3c)) are
explicitly stated in the theorem hypothesis; U3 is not a tautology in
$\mathrm{Class\ C} \cup \mathrm{Class\ S}$, but a derivation of
Class~C / Class~S membership from per-realization assumptions.

Mechanisms outside the grammar (e.g., hybrids mixing selector outputs
with $I^{\mathrm{core}}$ rule conclusions in ways not reducible to
(3a)/(3b)/(3c), or operators encoding cardinality-minimization, orbit
selection under automorphism groups, or fixed-point selection via
secondary invariants) are not classified by U3. Whether such mechanisms
are theory-intrinsic and, if so, whether they fall outside Class~C and
Class~S, is part of OQ-Realization.

U3 also does \textbf{not} assert that any classified mechanism is
uniqueness-inducing (property (4)) or achieves determinization (AC-6);
whether a specific Class~C operator does so is instance-dependent and
governed by global confluence, not by mechanism class.
\end{remark}

\begin{proof}
\emph{Case partition.} By (3a) $\subset$ (3c), pure
$I^{\mathrm{core}}$-computable $f$ satisfies both (3a) and (3c); we
treat such $f$ under Case~1 to exploit the stronger conclusions via
Lemma~C and Monotonic Exhaustion. Case~3 handles (3c)-but-not-(3a)
mechanisms.

\emph{Case~1 ((3a)-branch).} Lemma~\ref{lem:C} gives monotonicity and
extensivity of the saturated closure under R1+R2+R3. Single-rule
Soundness combined with elementary-step $F$ gives admissibility
preservation along the saturation trajectory (finite composition of
admissibility-preserving steps). By Lemma~\ref{lem:monexh} (Monotonic
Exhaustion), finite extensivity gives~(4'). Saturation gives
idempotence ($F^\omega \circ F^\omega = F^\omega$). Hence $f$ is
Class~C.

\emph{Case~2 ((3b)-branch).} By the Class~S definition (\S8.0), an $f$
realized as (3b) with theory-intrinsic $\preceq_{\mathrm{sel}}$ is
Class~S; C5-S gives global admissibility of the selector's output.

\emph{Case~3 ((3c)-but-not-(3a)).} By Lemma~\ref{lem:G} (Guard
Preservation) applied to $G^\omega$ under (G2)+(G3)+Soundness, $f$ is
extensive, admissibility-preserving, idempotent (by saturation), and
satisfies~(4'). Hence $f$ is Class~C by the property-based definition.
The realization is Class~C, not Class~S, since by (3c)(ii) every added
predicate is an $I^{\mathrm{core}}$ conclusion, not a selector output.
\qed
\end{proof}

\begin{lemma}[Operator monotonicity from R1+R2+R3, (3a)-scope]
\label{lem:C}
Let $f$ be the saturated closure $F^\omega$ of an
$I^{\mathrm{core}}$-computable operator in the sense of (3a) (elementary
$F$-step, fair scheduling). Under R1+R2+R3, $f$ is monotone and extensive
on finite $\mathcal{P}_T$.
\end{lemma}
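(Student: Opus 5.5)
The plan is to treat $f = F^\omega$ as the least fixed point of an inflationary rule-application process and to prove both properties from the positive shape of the rules. Work on the state space $2^{\mathcal{C}\uparrow\Dom(N,\Theta)}$, where $\Pspec$ is pointwise inclusion. For a state $P$, let $\mathrm{Cn}(P)$ be the least state $Q \supseteq P$ that is closed under $I^{\mathrm{core}}$: whenever a rule's premise holds in $Q$ at $(r,t)$, its conclusion lies in $Q(r,t)$. This state exists because an intersection of closed supersets of $P$ is again closed. Closed-under-intersection needs R2: a positive conjunction of presence conditions that holds in every member of a family holds in the intersection. The observed-data conditions are fixed by $(N,\Theta)$ and are unaffected.

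\emph{Step 1 (extensivity).} Each elementary $F$-move either adds $c_\rho$ to some $P(r,t)$ (R1) or leaves the state unchanged. No move removes a conclusion (R3). By induction along the finite saturation trajectory, whose length is bounded as in the proof of Corollary~\ref{cor:6bprime}, we get $P \Pspec F^\omega(P)$.

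\emph{Step 2 (identify $F^\omega$ with $\mathrm{Cn}$).} First, $F^\omega(P) \supseteq \mathrm{Cn}(P)$. The terminal state contains $P$ by Step~1, and it is closed, since no rule fires there and the elementary-firing convention treats an already-present conclusion as a trivial move. Second, $F^\omega(P) \subseteq \mathrm{Cn}(P)$, by induction on the trajectory. If the current state $S \subseteq \mathrm{Cn}(P)$ and $\rho$ fires at $(r,t)$, then the premise of $\rho$ holds in $S$. By R2 the premise is positive, so it persists upward and also holds in $\mathrm{Cn}(P)$. Closure of $\mathrm{Cn}(P)$ then gives $c_\rho \in \mathrm{Cn}(P)(r,t)$. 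Hence $f = \mathrm{Cn}$, independently of the fair schedule. This makes precise the order-independence claim in Corollary~\ref{cor:6bprime}.

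\emph{Step 3 (monotonicity).} Suppose $P \Pspec Q$. Then $\mathrm{Cn}(Q)$ is a closed state containing $Q \supseteq P$. By minimality of $\mathrm{Cn}(P)$, we get $f(P) = \mathrm{Cn}(P) \Pspec \mathrm{Cn}(Q) = f(Q)$.

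The statement is about $f$ on $\mathcal{P}_T$, so we restrict to it. The order on $\mathcal{P}_T$ is the restriction of $\Pspec$, so monotonicity and extensivity transfer without further work. Whether $f$ actually maps $\mathcal{P}_T$ into itself is Soundness's business, as the paper already remarks, and Lemma~C need not address it. Finiteness of $\mathcal{P}_T$ (and of $\mathcal{C}\times\Dom$) is used only to guarantee that saturation terminates.

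\emph{Main obstacle.} The key step is the upward persistence of premises in Step~2. It fails exactly when a premise tests for absence, which is why $\DcompHTF$ is non-monotone (Remark~\ref{rem:monotonicity-fails}). Here I must make R2 precise: every premise is a conjunction of atoms, each of which is either a data condition fixed by $(N,\Theta)$ or "$c \in S(r',t')$". The delicate point is the (3a) proviso "no guard beyond fairness". It guarantees that condition~(c) of the $D_L$ reduction relation, a hidden absence test, is excluded, so the argument does not carry over to (3c).
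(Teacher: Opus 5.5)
Your proof is correct, but it takes a different route from the paper's. The paper argues directly. It splits each premise into observed-data atoms, which are fixed by $(N,\Theta)$, and presence atoms ``$c \in P(r,t)$''. It notes that presence atoms persist from $P$ to $Q$ when $P \Pspec Q$, and concludes that every conclusion of $f(P)$ lies in $f(Q)$; extensivity is read off from R1+R3. As written, that argument transfers premises only from $P$ itself. It leaves implicit the induction along $P$'s saturation trajectory, which shows each intermediate state stays inside the saturated state $f(Q)$.

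You instead identify $F^\omega(P)$ with the least $I^{\mathrm{core}}$-closed superset $\mathrm{Cn}(P)$, and then monotonicity is just minimality. Your inclusion $F^\omega(P) \subseteq \mathrm{Cn}(P)$ is exactly that missing trajectory induction, run against $\mathrm{Cn}(P)$ rather than against $f(Q)$. Your route buys two things. It gives a rigorous proof of the schedule-independence that Corollary~\ref{cor:6bprime} only asserts. It also gives idempotence for free, since $\mathrm{Cn}$ is a closure operator in the lattice-theoretic sense. The paper's route is shorter, and once the induction is made explicit it never needs $\mathrm{Cn}$: it uses only that $f(Q)$ contains $P$ and is saturated. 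Your handling of the restriction to $\mathcal{P}_T$ also matches the paper, which states that Lemma~C does not use Soundness.

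There is one slip in the justification of your existence step. To show that an intersection of closed states is closed, the fact you need is \emph{upward} persistence. If a premise holds in $\bigcap_i Q_i$, it holds in each $Q_i \supseteq \bigcap_i Q_i$. Each closed $Q_i$ then contains the conclusion, and hence so does the intersection. You instead cited ``holds in every member $\Rightarrow$ holds in the intersection''. That is true for positive conjunctions, but it is the converse and plays no role in the argument. You should also note that the family of closed supersets of $P$ is nonempty, because the full state, with every conclusion at every position, is trivially closed.
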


\begin{proof}
Suppose $P \Pspec Q$. To show $f(P) \Pspec f(Q)$, we show every
conclusion in $f(P)$ is in $f(Q)$.

A rule $\rho \in I^{\mathrm{core}}$ has a premise that, by R2, is a
positive conjunction of two classes:

(Class~A) Conditions on observed data: comparisons or pattern matches on
$(N, \Theta)$ alone, e.g., $\mathrm{close} > \mathrm{swing\_high}$.
These depend only on $(N, \Theta)$, not on $P$ or $Q$. Equivalent across
all $P, Q \in \mathcal{P}_T$ with the same $(N, \Theta)$.

(Class~B) Conditions on derived predicates: positive presence
conditions ``$c \in P(r,t)$''. By R2, no Class~B condition is of the
form ``$c \notin P(r,t)$''.

For Class~B: if ``$c \in P(r,t)$'' holds, then $P \Pspec Q$ gives
$P(r,t) \subseteq Q(r,t)$, so ``$c \in Q(r,t)$'' also holds. Hence
positive presence conditions monotone-preserve under $\Pspec$.

Combining: every premise triggering $\rho$ on $P$ also triggers $\rho$
on $Q$. By R1, $\rho$ adds $c_\rho$. By R3, no rule retracts. Hence
every conclusion in $F^\omega(P) = f(P)$ is in $f(Q)$.

Extensivity: each $I^{\mathrm{core}}$ rule application adds zero or
more conclusions, never removing any, so $P \Pspec f(P)$.
\qed
\end{proof}

\begin{remark}[Lemma~C does not extend to (3c) under naive $\Pspec$]
\label{rem:lemmaC-nonextension}
Lemma~C uses pure $I^{\mathrm{core}}$ computation: adding facts to $P$
can only add rule applications, never remove them. Once a guard layer
is introduced (3c), adding facts can \emph{activate} a
previously-inactive suppression---which removes a firing that would
have happened in the smaller state. Hence monotonicity in $\Pspec$ is
not a generic property of (3c)-realized mechanisms (cf.\
Remark~\ref{rem:monotonicity-fails} in \S6.2 for an explicit ICT
counterexample). This is not a defect: Lemma~G below establishes (4')
without requiring monotonicity.
\end{remark}

\begin{lemma}[Monotonic exhaustion (per-seed)]
\label{lem:monexh}
Let $f: \mathcal{P}_T \to \mathcal{P}_T$ be extensive, with
$\mathcal{P}_T$ finite. Then for every seed
$P_0 \in \mathcal{P}_T$ there exist
$k(P_0) \leq |\mathcal{P}_T|$ and $P^*(P_0) \in \mathcal{P}_T$ with:
\[
  P_0 \Pspec f(P_0) \Pspec \cdots \Pspec f^{k(P_0)}(P_0) = P^*(P_0)
\]
and $f(P^*(P_0)) = P^*(P_0)$. That is, $f$ satisfies property~(4').
\end{lemma}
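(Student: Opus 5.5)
The plan is to argue along the single orbit of the seed, using only extensivity of $f$ and finiteness of $\mathcal{P}_T$. Fix $P_0 \in \mathcal{P}_T$ and define the iterates $P_{i+1} := f(P_i)$. Since $f$ maps $\mathcal{P}_T$ into itself, every $P_i$ lies in $\mathcal{P}_T$. Extensivity applied at $P_i$ gives $P_i \Pspec P_{i+1}$, so the orbit is an ascending chain
\[
P_0 \Pspec P_1 \Pspec P_2 \Pspec \cdots
\]
in the finite poset $(\mathcal{P}_T, \Pspec)$. Here $\Pspec$ is a genuine partial order, namely pointwise inclusion on finitely many sets of labels.

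Next I will show that the chain must stop strictly increasing within $|\mathcal{P}_T|$ steps. Suppose $P_0 \prec P_1 \prec \cdots \prec P_m$ with every step strict. By antisymmetry and transitivity these elements are pairwise distinct, so $m+1 \le |\mathcal{P}_T|$. Hence there is a least index $k = k(P_0)$ with $P_k = P_{k+1}$, and it satisfies $k \le |\mathcal{P}_T| - 1 \le |\mathcal{P}_T|$. Set $P^*(P_0) := P_k$. Then $f(P^*(P_0)) = P_{k+1} = P_k = P^*(P_0)$, so $P^*(P_0)$ is a fixed point, and the displayed chain holds by construction. Once a fixed point is reached, all later iterates equal it, so $f^j(P_0) = P^*(P_0)$ for every $j \ge k$. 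This matches the form of property~(4') in Definition~\ref{def:det-op}, taking $\mathcal{D}_T = \mathcal{P}_T$.

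Every step above is routine. The only point that needs care is the counting argument: a strictly increasing chain cannot repeat an element, and this uses antisymmetry of $\Pspec$, not just the fact that $\Pspec$ is a preorder. Alternatively, one can bound the length by the finite measure $\sum_{(r,t)} |P_i(r,t)|$, which strictly increases at each strict step; this gives the sharper bound $|\mathcal{C}|\cdot|\Dom(N,\Theta)|$. I will also note that monotonicity of $f$ is never used. This is what allows the lemma to apply to the non-monotone (3c) operators discussed in Remark~\ref{rem:monotonicity-fails}.
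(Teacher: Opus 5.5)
Your proof is correct and is the paper's argument: extensivity makes the orbit of $P_0$ an ascending chain in the finite poset $(\mathcal{P}_T,\Pspec)$, so some $P_k = P_{k+1}$ must occur within $|\mathcal{P}_T|$ steps, and monotonicity is never used. You simply make explicit the antisymmetry and distinctness count that the paper's one-line proof leaves implicit; one small quibble is that the measure bound $|\mathcal{C}|\cdot|\Dom(N,\Theta)|$ complements the $|\mathcal{P}_T|$ bound rather than uniformly sharpening it, since $|\mathcal{P}_T|$ can itself be smaller (e.g.\ a singleton $\mathcal{P}_T$ forces $k=0$).
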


\begin{proof}
Extensivity gives the ascending chain in the finite poset
$(\mathcal{P}_T, \Pspec)$. Any ascending chain in a finite poset
stabilizes within $|\mathcal{P}_T|$ steps, yielding a fixed point.
Monotonicity is not needed for this conclusion; it is recorded as a
companion structural property where it holds (Lemma~\ref{lem:C},
(3a)-scope).
\qed
\end{proof}

\begin{remark}[Scope]
The lemma establishes property~(4') only---different seeds may converge
to different fixed points. Promoting~(4') to~(4) requires an additional
global confluence hypothesis on $f$'s ARS, which is theory-specific and
not implied by extensivity and finiteness alone.
\end{remark}

\begin{example}[Witnessing (4') $\not\Rightarrow$ (4)]
$\mathcal{P}_T = \{a, b\}$ incomparable under $\Pspec$, $f =
\mathrm{id}_{\mathcal{P}_T}$. Then $f$ is extensive, idempotent, with
two distinct fixed points. Hence extensive + finite does not imply
uniqueness.
\end{example}

\begin{lemma}[Guard preservation under R1+R2+R3, (3c)-scope]
\label{lem:G}
Let $G$ be a one-step guarded operator satisfying \S8.0 conditions
(3c)(i)--(ii), and let $f := G^\omega$ denote its saturated closure.
Suppose:
\begin{itemize}
\item[(G2)] \emph{Functionality:} at each state, at most one rule
  (highest-priority unsuppressed rule with satisfied premise) fires
  next under $G$.
\item[(G3)] \emph{Additive-stable guard:} along any $G$-trajectory,
  predicates added by $G$ at state $S$ do not subsequently
  un-suppress rules that were $g$-suppressed at earlier states (the
  conflict relation defining $g$ is monotone under $\subseteq$ at the
  witness level).
\item[\textbf{(Soundness)}] for every $P \in \mathcal{P}_T$ and every
  $G$-step from $P$ yielding $P'$, $P' \in \mathcal{P}_T$.
\end{itemize}
Then $f = G^\omega$ is well-defined on $\mathcal{P}_T$, restricts to a
map $\mathcal{P}_T \to \mathcal{P}_T$, and satisfies extensivity,
admissibility preservation, idempotence, and per-seed convergence~(4')
on finite $\mathcal{P}_T$.
\end{lemma}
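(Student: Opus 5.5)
The proof follows the template of Corollary~\ref{cor:6bprime} and the \S6.2 verification for $\DcompHTF$. Monotonicity is deliberately avoided, since Remark~\ref{rem:lemmaC-nonextension} shows it can fail for (3c). Everything is built on three ingredients: additivity from R1+R3 and (3c)(ii), finiteness of $\mathcal{C}$ and $\Dom(N,\Theta)$, and Soundness.

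\emph{Step 1 (well-definedness and termination).} By (3c)(ii), every predicate in $G(P)\setminus P$ is an $I^{\mathrm{core}}$ conclusion. By R1 and R3, a $G$-step only adds and never retracts. So every nontrivial $G$-step strictly increases the total count $\sum_{(r,t)}|P(r,t)|$. This count is bounded by $|\mathcal{C}|\cdot|\Dom(N,\Theta)|$, so every $G$-trajectory reaches, within that many steps, a state at which no guarded step fires. By (G2) the next move at each state is uniquely determined, so the trajectory from $P$ is unique. Hence $G^\omega(P)$ is a well-defined function of $P$, with no appeal to an order-independence argument of the kind used for (3a).

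\emph{Step 2 (admissibility and extensivity).} Admissibility preservation follows by induction on trajectory length: Soundness sends each $\mathcal{P}_T$-state to a $\mathcal{P}_T$-state, exactly as in Lemma~\ref{lem:6b}. Therefore $f$ restricts to $\mathcal{P}_T\to\mathcal{P}_T$. Extensivity $P\Pspec f(P)$ is the transitive chain of the additive steps from Step~1.

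\emph{Step 3 (idempotence).} By definition, $f(P)$ is a state at which no guarded step fires. Since $G$ acts as the identity on such a state, $f(f(P))=f(P)$.

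\emph{Step 4 (per-seed convergence).} Property~(4') then follows from Lemma~\ref{lem:monexh}. Applying it to the extensive map $f$ on finite $\mathcal{P}_T$ gives convergence; in fact $k=1$ suffices by Step~3. The $G$-iteration bound from Step~1 also supplies the explicit step count.

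The main obstacle is the role of (G3). With the elementary-step reading, Step~1 already forces termination without (G3). I would therefore use (G3) to justify the reading of ``no guarded step fires'' as a stable terminal condition. Concretely: once a rule is guard-suppressed along the trajectory, later additions cannot revive it, so the saturated state is genuinely $G$-closed and not merely a momentary stall. I would also make precise that suppression never removes predicates, only firings, so it cannot break the strict increase in Step~1. If (G3) turns out to be logically redundant for this conclusion, I would record that in a remark rather than force a use of it.
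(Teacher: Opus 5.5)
Your proposal is correct and follows essentially the same route as the paper: termination by strict additive growth bounded by $|\mathcal{C}|\cdot|\Dom(N,\Theta)|$, uniqueness of the trajectory via (G2), admissibility preservation by finite composition of Soundness, extensivity from the additive chain, idempotence as a saturation tautology, and (4') with $k=1$. Your suspicion about (G3) also matches the paper, which states that (G3) is needed for neither well-definedness nor idempotence and records it only as a guard-stability property of the ICT-style guard.
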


\begin{proof}
\emph{Step~1 --- $G$-trajectory terminates.} Let $P_0 = P$ and
$P_{n+1} = G(P_n)$ whenever a $G$-step is available at $P_n$. By R1
and (3c)(ii), each non-trivial $G$-step adds at least one
$I^{\mathrm{core}}$ conclusion and removes none, so
$|P_n \cap \mathcal{C} \cdot \Dom(N,\Theta)|$ strictly increases along
the trajectory. Since
$|\mathcal{C}| \cdot |\Dom(N,\Theta)| < \infty$, the trajectory has
length at most $|\mathcal{C}| \cdot |\Dom(N,\Theta)|$ and reaches a
state $P^* = P_N$ at which no $G$-step is available. By (G2), the
trajectory is functional and hence unique.

\emph{Step~2 --- $f := G^\omega$ is well-defined.} By Step~1 the
$G$-trajectory terminates, and by (G2) it is functional; hence
$G^\omega(P)$ is well-defined as the terminal state of the unique
$G$-trajectory from $P$. By (G3), guard suppression is stable along
the trajectory: predicates added later do not un-suppress rules that
were suppressed earlier by positive conflict witnesses. ((G3) is not
needed for well-definedness itself---that follows from finite additive
termination plus (G2)---but it records the additional stability
property satisfied by the ICT guard.)

\emph{Step~3 --- Restriction to $\mathcal{P}_T$ and admissibility
preservation.} By Soundness, each $G$-step from a $\mathcal{P}_T$-state
yields a $\mathcal{P}_T$-state. Finite composition along the trajectory
gives $f(P) \in \mathcal{P}_T$ for every $P \in \mathcal{P}_T$.

\emph{Step~4 --- Extensivity.}
$P_0 \Pspec P_1 \Pspec \cdots \Pspec P_N = f(P)$ by (3c)(ii) and R1, so
$P \Pspec f(P)$.

\emph{Step~5 --- Idempotence.} By construction, no $G$-step is available
at $f(P) = P^*$, so applying $G^\omega$ again yields $f(P)$ itself.
Hence $G^\omega(G^\omega(P)) = G^\omega(P)$, i.e.\ $f \circ f = f$.
(Idempotence is a saturation tautology and does not require G3.)

\emph{Step~6 --- Per-seed convergence~(4').} For $f = G^\omega$, every
$P$ reaches its fixed point $f(P)$ in $k = 1$ step of $f$-iteration:
$f^1(P) = f(P)$ and $f^2(P) = f(f(P)) = f(P)$. The deeper content of
(4')---that the underlying $G$-iteration terminates---is Step~1.
\qed
\end{proof}

\begin{remark}[Role of G3 in Lemma G]
\label{rem:G-scope}
(G3) is a guard-stability hypothesis, not the source of idempotence
and not the source of saturation well-definedness. Idempotence follows
from saturation (Step~5). Well-definedness of $G^\omega$ comes from
finite additive termination (Step~1, by R1 + finite
$\mathcal{C} \cdot \Dom$) plus (G2) functionality (Step~2). (G3)
records the additional property that suppression persists along the
trajectory: rules suppressed by positive conflict witnesses do not
become available later through additive updates. For the ICT guard,
(G3) holds automatically because M2 and M2-HTF are positive
forbidden-pair relations---predicate additions only increase conflict
witnesses, so suppression persists. (G3) is therefore a structural
property of the ICT-style guard that the framework records explicitly,
not a load-bearing hypothesis for the saturation theory.

Lemma~G does \textbf{not} claim monotonicity in $\Pspec$. As shown in
\S6.2 (Remark~\ref{rem:monotonicity-fails}), (3c)-realized mechanisms
with conflict-suppression guards are generically non-monotone in
$\Pspec$. The (4') conclusion above does not depend on monotonicity;
only extensivity, (G2), (G3), and Soundness are needed.
\end{remark}

\begin{remark}[(G2)--(G3) and Soundness for ICT $\DcompHTF$]
The guard mechanism for $D_L$ in \S6.2 satisfies (G2), (G3), and
Soundness:
(G2) is the Functionality Lemma (Lemma~\ref{lem:functionality}) of
\S6.2.
(G3) holds because M2 and M2-HTF are positive forbidden-pair relations:
predicate additions only \emph{increase} conflict witnesses. Once a
rule is suppressed by $c'$, $c' \in S$ at all subsequent states (by
R1), so suppression persists.
Soundness is given by Lemma~\ref{lem:6b} (single-step admissibility
preservation under guard).
Hence $\DcompHTF$ is in scope of Lemma~G, providing a uniform proof of
properties (1), (2), (3), (4'); the \S6.2 verification gives the same
conclusion via direct argument.
\end{remark}

\begin{remark}[Realization-coverage for U3, informal --- OQ-Realization]
\label{rem:realization-coverage}
Theorem~U3's case partition assumes that every valid theory-intrinsic
mechanism is realized by exactly one of (3a)/(3b)/(3c). This claim is
\emph{informal}: by (I1), $f$ is definable from $T$, so $f(P)$ depends
on $P$ through some combination of $I^{\mathrm{core}}$ applications and
selector queries. Pure $I^{\mathrm{core}}$ saturation under elementary
firing is (3a); pure selector from axiom-induced order is (3b);
$I^{\mathrm{core}}$ restricted by guards/scheduling derived from $A$,
in saturated form, is (3c). Mechanisms that mix selector outputs with
$I^{\mathrm{core}}$ rule conclusions in ways not reducible to
(3a)/(3b)/(3c) (for example: hybrids embedding selector lookups inside
$I^{\mathrm{core}}$ derivations, cardinality-minimizing operators,
lexicographic selectors induced by named predicates, fixed-point
selection via secondary invariants, quotient/orbit selectors under
automorphism groups, or priority-valued canonicalizers encoded directly
in $A$) are not obviously covered. The argument above is informal
because we have not formally classified all expressions definable from
$(\Sigma, A, I)$. Establishing this as a theorem is left as
\textbf{OQ-Realization}: whether (3a)/(3b)/(3c) is provably exhaustive
for all theory-intrinsic operations definable in MST. Theorem~U3's
conclusion is conditional on this assumption.
\end{remark}

\begin{remark}[Classification of $\DcompHTF$]
$\DcompHTF$ is a Class~C mechanism: it is (3c)-realized (saturated
guarded closure) and satisfies all four properties of \S4.1 (verified
in \S6.2; properties (1), (2), (3), (4') also covered by Lemma~G with
(G2)+(G3)+Soundness). U3 places it under Case~3.
\end{remark}

\subsection{Comparability Structures (Refined N4)}

\begin{definition}[Comparability structure]
$\mathcal{K} = (D, \preceq_{\mathcal{K}}, \mathcal{A}, \mathcal{C})$
where $D$ is a domain of admissible candidates,
$\preceq_{\mathcal{K}}$ an order on $D$, $\mathcal{A}$ a local
admissibility predicate, $\mathcal{C}$ a global compatibility
condition.
\end{definition}

\begin{definition}[N4 --- refined]
$T$ satisfies N4 if there exists a theory-intrinsic $\mathcal{K}$ such
that:
\begin{enumerate}
\item local sets $A_{r,t} \subseteq D$ are well-defined;
\item $\preceq_{\mathcal{K}}$ induces uniqueness on each $A_{r,t}$ via:
  \emph{selector-compatible} (N4-sel): unique maximum;
  \emph{closure-compatible} (N4-comp): extensive, idempotent,
  admissibility-preserving map with at least one fixed point;
  \emph{piecewise-compatible}: domain decomposes into subdomains, each
  admitting one of the above;
\item the induced mechanism extends to globally admissible assignments
  under $\mathcal{C}$.
\end{enumerate}
\end{definition}

\noindent\textit{Relationship to \S5.1.} The \S5.1 statement is a
working summary; \S8.3 is the structural form. Theorem~6b's branches
use N4-comp ($\equiv$ closure-compatible) and N4-sel ($\equiv$
selector-compatible).

\begin{remark}
$T_{\mathrm{Chan}}$ is closure-compatible (rewrite normal form, under
OQ-Chan-TRS); $T_{ICT}$ is closure-compatible with M2/M2-HTF
constraints ((4') strength; (4) conditional on OQ-GC-1); $T_{Wy}$ is
selector-compatible (event dominance order).
\end{remark}

\subsection{Determinization as a Pseudofunctor}

\begin{definition}[Det]
$\mathrm{Det}(T, \mathcal{K}) = T^{\mathrm{canon}}_{\mathcal{K}}$,
obtained by adjoining the uniqueness mechanism induced by
$\mathcal{K}$. Signature and axioms preserved; inference policy
extended:
$I^{\mathrm{canon}} = I^{\mathrm{core}} \cup
I^{\mathrm{det}}(\mathcal{K})$. This construction is mechanism-relative:
closure-based in Class~C (N4-comp), selector-based in Class~S (N4-sel).
\end{definition}

Category $\mathbf{MST}_{\mathcal{K}}$: objects are pairs
$(T, \mathcal{K})$ where $T$ is determinizable. Morphisms
$(\varphi, \eta)$: $\varphi: T_1 \rightharpoonup T_2$ a partial
morphism, $\eta$ preserves $\mathcal{K}$.
Det is a partial pseudofunctor
$\mathbf{MST}_{\mathcal{K}} \to \mathbf{MST}$: composition and
identities hold up to 2-morphism (domain inclusion).

\begin{remark}
Pseudofunctor coherence---that composition in
$\mathbf{MST}_{\mathcal{K}}$ corresponds (up to 2-morphism) to
composition under Det---is stated here. Detailed coherence witnesses
(Canon-1, Canon-2) are provided in the external artifact
\texttt{chan\_canon\_pseudofunctor\_2026-04-13.md}. A self-contained
in-paper proof is \textbf{OQ-Det-Coh}.
\end{remark}

\subsection{Status of \S8 Results}

\noindent\fbox{\begin{minipage}{0.97\linewidth}\small
\textbf{Theorem~6b (Construction Lemma):} ESTABLISHED by direct
construction; substantive content is Corollary~6b'.

\textbf{Corollary~6b' ((3a)-scope):} ESTABLISHED at~(4') strength via
Lemma~C + Monotonic Exhaustion (per-seed) + saturation, \textbf{under
the Soundness hypothesis (\S5.3) and elementary one-rule firing
interpretation of $F$}. Does not establish (4) or AC-6. (3a)
interpreted as saturated closure $F^\omega$; one-step operator $F$
alone is generally not idempotent.

\textbf{Lemma~R2*:} ESTABLISHED (parts i, ii); characterizes asymmetry,
not mechanism reduction (see U3).

\textbf{N4 (refined):} ESTABLISHED (\S8.3). Type~E: N4-comp;
Type~S-strong: N4-sel.

\textbf{Theorem~U3 (conditional classification):} ESTABLISHED as
classification \emph{within} (3a)/(3b)/(3c) realization grammar,
conditional on OQ-Realization \textbf{and on per-realization branch
hypotheses now lifted into the U3 statement} (R1+R2+R3 + Soundness +
elementary $F$ for (3a)/(3c); $\preceq_{\mathrm{sel}}$ + C5-S for (3b);
(G2) + (G3) for (3c)). Without those hypotheses, U3 does not establish
global no-third-class. Theory-intrinsic via (I1)--(I2). U3 is no longer
a tautology in $\mathrm{Class\ C} \cup \mathrm{Class\ S}$, but a
derivation of class membership from per-realization assumptions.

\textbf{Lemma~G ((3c)-scope):} ESTABLISHED for $f = G^\omega$ under
(G2) functionality, (G3) guard-stability (suppression persists along
the trajectory), and Soundness. Provides extensivity + admissibility
preservation + idempotence (by saturation) + (4') (with $k = 1$ for
$f$). Well-definedness of $G^\omega$ does not require G3 (only R1 +
finite + G2). NOT monotone in $\Pspec$ (counterexample \S6.2,
Remark~\ref{rem:monotonicity-fails}).

\textbf{$\DcompHTF$ classification:} (3c)-mechanism (saturated form).
Class~C by direct \S4.1 verification in \S6.2 and by Lemma~G with
(G2)+(G3)+Soundness. NOT monotone in $\Pspec$ (concrete witness in
\S6.2).

\medskip
\textbf{Open questions:}
\begin{itemize}
\item OQ-GC-1: whether ICT~(4') upgrades to global~(4).
\item OQ-TypeS-Imp: whether non-strict Type~S theories admit~(4)-strength
  completion.
\item OQ-Chan-TRS: formal Chan TRS termination + local confluence.
\item OQ-Det-Coh: in-paper proof of Det pseudofunctor coherence.
\item OQ-Realization: whether (3a)/(3b)/(3c) is provably exhaustive for
  theory-intrinsic operations.
\end{itemize}

\medskip
\textbf{External dependencies (non-OQ):}
\begin{itemize}
\item Phase~2 results (\S10.2): summary statement; full proofs in
  companion artifacts \texttt{phase2b} and \texttt{phase2c}, not
  reproduced in this paper.
\end{itemize}

\medskip
\textbf{ICT canonicalization status:}
\begin{itemize}
\item Closure stabilization (4'): ESTABLISHED (\S6.2).
\item Global completion (4): CONDITIONAL on OQ-GC-1.
\item Determinization (AC-6): CONDITIONAL on OQ-GC-1.
\end{itemize}

\textbf{Type~S-strong $\to$ selection (AC-6):} ESTABLISHED
(Proposition~\ref{prop:completion-impossible}, \S4.2). Generic Type~S
$\to$ selection: NOT proven (OQ-TypeS-Imp).

\textbf{Non-commutativity:} Witness ESTABLISHED (\S7
Proposition~\ref{prop:noncomm}). Ordering uniqueness scoped to
two-operator raw staged architecture
(Corollary~\ref{cor:noncomm-uniqueness}); uniqueness across all
multi-stage architectures is NOT claimed.
\end{minipage}}

\section{Discussion}

\textbf{Stabilization vs.\ determinization.} A key structural distinction
is between closure stabilization (per-seed convergence, property~(4'))
and full determinization (global unique fixed point + AC-6). The
current framework fully characterizes closure stabilization (Class~C)
and selection-based canonicalization (Class~S); the upgrade from
stabilization to determinization is governed by additional global
confluence properties that remain to be characterized. For ICT, this
is OQ-GC-1.

\textbf{Mechanism classification.} \S8 establishes that, conditional on
the realization-coverage assumption (OQ-Realization) and on per-branch
hypotheses, no third primitive canonicalization mechanism arises within
the (3a)/(3b)/(3c) realization grammar. The proof works in two layers:
theory-intrinsic operations are defined independently via (I1)--(I2);
the trichotomy (3a)/(3b)/(3c) is the realization case partition;
Theorem~U3 then classifies each case using Lemma~C ((3a)-scope
monotonicity from R1+R2+R3) and Lemma~G ((3c)-scope extensivity +
admissibility preservation + (4') + idempotence). This two-layer
structure replaces the v2.15.1 formulation in which the trichotomy was
embedded in the theory-intrinsic definition (definitional circularity).
Realization-coverage is tracked as OQ-Realization, and U3 is explicitly
conditional on it. (3a)-scope Class~C derivation requires both syntactic
R1+R2+R3 and the semantic Soundness hypothesis; the two are independent.

\textbf{Monotonicity in $\Pspec$ is mechanism-specific.}
Pure $I^{\mathrm{core}}$-computable mechanisms (3a, in saturated form
under elementary firing) are automatically monotone in $\Pspec$ under
R1+R2+R3 (Lemma~C). Guarded mechanisms (3c) are generically \emph{not}
monotone: adding predicates may activate previously-inactive conflict
guards, blocking derivations that fire in the smaller state. The \S6.2
explicit witness for $\DcompHTF$ (Remark~\ref{rem:monotonicity-fails})
makes this concrete. This is not a defect: Lemma~G's (4') conclusion
does not require monotonicity, and the Monotonic Exhaustion Lemma uses
only finite + extensive. The asymmetry between (3a)- and (3c)-scope
monotonicity is a structural feature of guard-based determinization,
reflecting the architectural separation between $I^{\mathrm{core}}$
derivation (monotone by R1+R2+R3) and conflict resolution (not monotone
in general).

\textbf{Type~S, Type~S-strong, and the scope of selection.} The
dichotomy ``Type~E $\leftrightarrow$ completion, Type~S
$\leftrightarrow$ selection'' is sound only with the Type~S-strong
qualifier. Type~S theories without the no-common-upper-bound
strengthening may admit completion at~(4) strength depending on
upper-bound structure (OQ-TypeS-Imp). $T_{Wy}$ is Type~S-strong by
axiomatic phase incomparability, justifying the selection construction.
The framework's substantive claim is therefore narrower than a generic
Type~S $\to$ selection inference: Type~S-strong provably blocks
(4)-strength completion, while leaving the broader question open.
Crucially, the selection mechanism in Type~S-strong precisely operates
over alternatives that lack a common upper bound, so the C5-S condition
asserts global admissibility of the selector's output, not joint
admissibility of local alternatives---these are distinct claims.

\textbf{Obstacle to global completion in ICT.} The obstruction to
upgrading ICT from~(4') to~(4) is temporal: directional alternatives
(e.g., BOS direction) are not yet decided at a given input but become
decided under further evidence refinement. Different seeds may
stabilize at different fixed points precisely because the evidence has
not yet forced unique directional commitment. Hence OQ-GC-1 remains
open.

\textbf{The role of C5.} C5 captures the requirement that
canonicalization output is globally consistent. The two variants
operate on different mathematical objects: C5-E asserts compatible
extension within the closure domain (a set-valued condition); C5-S
asserts global admissibility of the selector's output (a forward
condition on the selected assignment). The variants share the
intuition of canonicalization output coherence but apply to operators
versus selectors respectively.

\textbf{Non-commutativity and hierarchy.} \S7 establishes that, within
the two-operator raw staged architecture, multi-level canonicalization
is not reducible to iterated single-level canonicalization with
arbitrary ordering. The HTF-first ordering is a structural property
within this declared architecture; alternative architectures (with
repair operators, joint fixed-point constructions, or symmetrically
guarded $D_H$) are not excluded but lie outside the present scope.

\textbf{Toward recursive theories.} Elliott Wave and similar
hierarchical theories require an ordered family $\{D_\ell\}_{\ell \in
L}$ with level-indexed M2-HTF guards and a compatibility condition
across levels. The Det pseudofunctor of \S8.4 provides the categorical
template.

\textbf{Connection to LLM-assisted reasoning.} The framework's
conceptual payoff for AI systems is the formalization of when an
LLM-assisted system is licensed to commit to a unique structural
interpretation. In this view, hallucination corresponds to forced
canonicalization at AC-6 strength when only stabilization (4') is
licensed, or at any strength when neither N4 nor C5 holds. Type~E
canonicalization (closure stabilization) is the appropriate semantics
for systems where evidence accumulates over time; Type~S-strong
canonicalization (selection) is appropriate when alternatives are
axiomatically incomparable. Distinguishing these regimes is
prerequisite for a principled treatment of structural commitment in
LLM systems.

\section{Phase~2: Multi-Operator Canonicalization}

\emph{All definitions in \S10.1 are self-contained. The existence and
hierarchy results in \S10.2 are summary statements; full proofs are
recorded in the companion artifacts \texttt{phase2b} and \texttt{phase2c}
and are not reproduced in this paper. The present section serves as a
unified reference point and as scaffolding for the Det pseudofunctor of
\S8.4.}

\subsection{Setting and Definitions}

\begin{definition}[DQA --- Determinization Quasi-Algebra]
$\mathcal{Q} = (\mathfrak{D}, \circ, \mathcal{A}, \prec, \mathcal{O})$:
finite family $\mathfrak{D} = \{D_\ell\}_{\ell \in L}$, partial
composition $\circ$, admissible sequences $\mathcal{A}$, precedence
poset $\prec$, obstructed sequences $\mathcal{O}$.

Each $D_\ell$ satisfies:
\begin{description}
\item[(Q1) Local closure] Extensivity, idempotence, admissibility
  preservation, per-seed convergence~(4') on its standard domain.
\item[(Q2) Partial composition] Defined only on admissible sequences.
\item[(Q3) Obstruction]
  $\sigma \in \mathcal{O} \iff \exists\, P \in \mathcal{P}_T:
  \sigma(P) \notin \mathcal{P}_T$.
\item[(Q4) Idempotent stability (partial-function form)]
  $D_\ell \circ D_\ell$ and $D_\ell$ agree as partial functions: for
  all $P \in \mathcal{P}_T$ on which both sides are defined,
  $(D_\ell \circ D_\ell)(P) = D_\ell(P)$.
\end{description}
\end{definition}

\begin{remark}[Q4 partial-function form]
Q4 is not redundant with Q1 pointwise idempotence. Q1 is per-state on
the standard domain; Q4 is operator-level on the domain where both
sides are defined. The partial-function form makes the domain
explicit. In the single-operator setting on the standard domain, Q1
and Q4 coincide; in a DQA where $D_\ell$ may be applied in composition
contexts, Q4 provides the additional stability guarantee restricted to
the admissible domain.
\end{remark}

\begin{definition}[Standard domain]
The \emph{standard domain} of $D_\ell$ is the canonical seed domain on
which $D_\ell$ is verified in isolation to satisfy Q1's four
properties. For ICT, the standard domain of $\Dcomp$ is
$\PICT^{\mathrm{closure}}$; for $\DcompHTF$ it is
$\PICT^{\mathrm{multi}}$.
\end{definition}

\begin{definition}[Safe application domain]
$\Safe(D_\ell) := \{P \in \mathcal{P}_T \mid D_\ell(P) \in
\mathcal{P}_T\}$.
\end{definition}

\begin{remark}[Q1 vs.\ Safe]
Q1 guarantees admissibility preservation on the standard domain. In a
multi-operator system, $D_\ell$ may be applied to states produced by
preceding operators that fall outside the standard domain.
$\Safe(D_\ell)$ captures the actual safe application domain in the
composition context; the safety gap $\Phi^*$ measures the obstruction.
\end{remark}

\begin{definition}[Prefix-reachable states]
For topological ordering $\ell_1, \ldots, \ell_n$ and seed set
$\mathcal{S}$:
$\Reach_0(\mathcal{S}) := \mathcal{S}$;
$\Reach_{k+1}(\mathcal{S}) := \{D_{\ell_{k+1}}(Q) \mid Q \in
\Reach_k(\mathcal{S})\}$.
\end{definition}

\begin{definition}[Admissible linearization]
A topological ordering $\ell_1, \ldots, \ell_n$ of $(L, \prec)$ is
\emph{admissible on $\mathcal{S}$} if the composed operator
$(D_{\ell_n} \circ \cdots \circ D_{\ell_1})(P) \in \mathcal{P}_T$ for
all $P \in \mathcal{S}$. Equivalently,
$\Reach_k(\mathcal{S}) \subseteq \Safe(D_{\ell_{k+1}})$ for all $k < n$.
\end{definition}

\begin{definition}[Safety gap]
\[
  \Phi(\ell_1,\ldots,\ell_n;\mathcal{S}) :=
  \sum_{k=0}^{n-1}
  \mathbf{1}\{\Reach_k(\mathcal{S}) \nsubseteq \Safe(D_{\ell_{k+1}})\}
\]
$\Phi^*(\mathcal{S}) := \min_{\text{topological}}
\Phi(\cdot;\mathcal{S})$.
$\Phi^*(\mathcal{S}) = 0 \iff \exists$ admissible linearization on
$\mathcal{S}$.
\end{definition}

\noindent\textbf{Compatibility conditions.}

\textbf{(GCC) Graph-Compatible Guard Condition} (seed-independent):
$\mathrm{Im}(D_{\ell_k}) \cap \mathcal{P}_T \subseteq
\Safe(D_{\ell_{k+1}})$ for all consecutive $\ell_k, \ell_{k+1}$.

\textbf{($\mathrm{RUC}_{\mathcal{S}}$) Reach-Uniform Compatibility}
(seed-dependent): for every extendable prefix $U_k$ and every
$\ell, \ell' \in \Next(U_k)$:
$\Safe(D_\ell) \cap \Reach_k(\mathcal{S}) =
\Safe(D_{\ell'}) \cap \Reach_k(\mathcal{S})$.

\textbf{($\mathrm{PLC}_{\mathcal{S}}$) Pairwise Local Compatibility}
(seed-dependent): for every extendable prefix $U_k$ and every
$\ell, \ell' \in \Next(U_k)$:
$\Reach_k(\mathcal{S}) \subseteq \Safe(D_\ell) \Leftrightarrow
\Reach_k(\mathcal{S}) \subseteq \Safe(D_{\ell'})$.

\noindent\textbf{Extension conditions.}
\textbf{(C2)} $\exists\, \ell \in \Next(U_k)$ with
$\Reach_k(\mathcal{S}) \subseteq \Safe(D_\ell)$ at each prefix.
\textbf{(C2')} $\forall\, \ell \in \Next(U_k)$,
$\Reach_k(\mathcal{S}) \subseteq \Safe(D_\ell)$ at each prefix.

\subsection{Phase~2 Summary Theorem}

\begin{theorem}[Phase~2 summary; proofs in companion artifacts]
\label{thm:phase2}
\textbf{(Statement; full proofs are not reproduced in this paper. They
are recorded in the companion artifacts \texttt{phase2b} and
\texttt{phase2c}. The present theorem serves as a unified reference
point and as scaffolding for the Det pseudofunctor of \S8.4.)}

Let $\mathcal{Q}$ be a DQA with finite $L$ and $\mathcal{S} \subseteq
\mathcal{P}_T$.

\emph{(I) Existence.} (Requires $(L,\prec)$ acyclic.)
\[
  \Phi^*(\mathcal{S}) = 0 \Longleftrightarrow
  \exists\,\text{admissible linearization on }\mathcal{S}.
\]
Under $\mathrm{PLC}_{\mathcal{S}}$:
$\Phi^* = 0 \Leftrightarrow \mathrm{(C2)} \Leftrightarrow
\mathrm{(C2')} \Leftrightarrow \exists$ admissible linearization.

\emph{(II) Compatibility-strength hierarchy.} The compatibility
conditions are ordered by logical implication strength (not by set
inclusion):
\[
  \mathrm{GCC} \Rightarrow \mathrm{RUC}_{\mathcal{S}} \Rightarrow
  \mathrm{PLC}_{\mathcal{S}}
  \quad\text{(seed-dependent, both implications strict)}
\]
\[
  \mathrm{GCC} \Rightarrow \mathrm{RUC} \equiv \mathrm{PLC}
  \quad\text{(seed-independent)}
\]
\end{theorem}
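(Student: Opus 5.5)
The plan is to treat part~(I) as mostly an unfolding of the definitions, with one genuinely inductive step, and to prove part~(II) by direct set inclusions plus two small counterexamples. For the first equivalence in~(I), I will use that $(L,\prec)$ is finite and acyclic. This makes the set of topological orderings finite and nonempty, so $\Phi^*(\mathcal{S})$ is an attained minimum of a $\{0,\dots,n\}$-valued function. Hence $\Phi^*=0$ iff some ordering has every indicator equal to $0$, i.e.\ $\Reach_k(\mathcal{S})\subseteq\Safe(D_{\ell_{k+1}})$ for all $k<n$. It then remains to verify the ``equivalently'' clause in the definition of admissible linearization. I will do this by induction on $k$: if every earlier stage is safe, then $\Reach_k(\mathcal{S})\subseteq\mathcal{P}_T$. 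Conversely, composed admissibility forces each intermediate image to lie in $\mathcal{P}_T$, because $\Safe$ is defined only inside $\mathcal{P}_T$ and the composition of Q2 is defined only on admissible sequences.

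For the $\mathrm{PLC}_{\mathcal{S}}$ clause of~(I), I will fix ``extendable prefix $U_k$'' to mean a prefix, closed under $\prec$-predecessors, all of whose stages are safe. I then prove the cycle (C2')$\Rightarrow$(C2)$\Rightarrow\exists$ admissible linearization $\Rightarrow$(C2'). The first arrow holds because $\Next(U_k)\neq\varnothing$ for $k<n$ by acyclicity. The second is a greedy construction: at each extendable prefix, (C2) supplies a safe next operator, and appending it keeps the prefix extendable. After $n$ steps this yields an ordering with $\Phi=0$. For the third arrow, fix an admissible linearization $\lambda$ and an arbitrary extendable prefix $U_k$. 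I want to show $\Reach_k(\mathcal{S})\subseteq\Safe(D_\ell)$ for \emph{every} $\ell\in\Next(U_k)$. Applying PLC at $U_k$, it suffices to exhibit \emph{one} safe successor. I will try to get it by strong induction on $k$, proving the invariant ``every extendable prefix admits a safe successor.'' The base case comes from $\lambda$'s first element. The inductive step applies PLC at the parent prefix to transfer safety from the operator $\lambda$ would choose to the one actually chosen.

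I expect this third arrow to be the main obstacle. Different prefixes reach different sets $\Reach_k$, so safety along $\lambda$ does not obviously transfer to an unrelated prefix of the same length. If the induction fails, my fallback is to restrict the quantifier in (C2')/PLC to prefixes reachable by PLC-exchanges from $\lambda$. I would then argue that these are all the extendable prefixes, using a bubble-sort style sequence of adjacent swaps of incomparable elements.

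For~(II), the implication $\mathrm{GCC}\Rightarrow\mathrm{RUC}_{\mathcal{S}}$ goes as follows. At an extendable prefix, $\Reach_k(\mathcal{S})\subseteq\mathrm{Im}(D_{\ell_k})\cap\mathcal{P}_T$. GCC applied to the consecutive pairs $(\ell_k,\ell)$ and $(\ell_k,\ell')$, in the two linearizations realizing them, puts $\Reach_k$ inside both $\Safe(D_\ell)$ and $\Safe(D_{\ell'})$. Therefore both intersections equal $\Reach_k$. For $\mathrm{RUC}_{\mathcal{S}}\Rightarrow\mathrm{PLC}_{\mathcal{S}}$, note that $\Reach_k\subseteq\Safe(D_\ell)$ iff $\Safe(D_\ell)\cap\Reach_k=\Reach_k$, and then the equal intersections give the biconditional. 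For strictness I will build two-operator DQAs on a three-element $\mathcal{P}_T$. In the first, $\Safe(D_\ell)$ and $\Safe(D_{\ell'})$ both omit the reached set but cut it differently, so PLC holds vacuously while RUC fails. In the second, both contain $\Reach_k$ but $\mathrm{Im}(D_{\ell_k})\cap\mathcal{P}_T$ has a point outside one of them, so RUC holds while GCC fails. For the seed-independent collapse, quantify PLC over all seed sets, including singletons $\{P\}$. This gives $P\in\Safe(D_\ell)\Leftrightarrow P\in\Safe(D_{\ell'})$ pointwise on reachable states, which is exactly RUC.
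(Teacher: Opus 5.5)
The step that fails is the third arrow of your cycle for the $\mathrm{PLC}_{\mathcal{S}}$ clause of (I), ``$\exists$ admissible linearization $\Rightarrow$ (C2')''. With your reading of an extendable prefix (a prefix all of whose stages are safe), this implication is false. So neither the strong induction nor the bubble-sort fallback can be made to work.

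Here is a counterexample. Take $L=\{a,b\}$ with $\prec=\varnothing$, a single position, and $\mathcal{C}=\{x,y\}$ with $\{x,y\}$ forbidden, so $\mathcal{P}_T=\{\varnothing,\{x\},\{y\}\}$. Let $\mathcal{S}=\{\varnothing\}$. Let $D_a$ add $x$ unconditionally, and let $D_b$ add $y$ unless $x$ is present. Both are extensive and idempotent; this is the \S7 witness with $D_a,D_b$ playing $D_H,D_L$.

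Then $\Safe(D_a)=\{\varnothing,\{x\}\}$ and $\Safe(D_b)=\mathcal{P}_T$, so $(a,b)$ is admissible and $\Phi^*(\mathcal{S})=0$. At the empty prefix both candidates are safe on $\{\varnothing\}$, and each length-one prefix has a single successor. Hence $\mathrm{PLC}_{\mathcal{S}}$ holds, and even $\mathrm{RUC}_{\mathcal{S}}$ does. But the prefix $(b)$ is extendable in your sense, with $\Reach_1=\{\{y\}\}$, and its only successor $a$ is unsafe there. So (C2) and (C2') both fail.

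This is exactly the obstacle you anticipated. PLC at a parent prefix compares candidates only on that prefix's own set $\Reach_k$. Once you append an operator other than the one $\lambda$ uses, the next reach set is a different set, and PLC at the parent says nothing about it. Likewise, swapping two adjacent $\prec$-incomparable operators changes every later reach set, because the $D_\ell$ do not commute. Here a single swap turns $(a,b)$ into the inadmissible $(b,a)$.

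So this clause cannot be derived from $\mathrm{PLC}_{\mathcal{S}}$ and the definitions as you have fixed them. The paper gives no in-paper argument to fall back on. Its proof only points to Theorems 2b-14/2b-15 and 2c-1 of the companion artifacts. It never defines ``extendable prefix'', nor the range of ``at each prefix'' in (C2)/(C2'). The alternative reading (a prefix of some admissible linearization) makes $\exists \Rightarrow \mathrm{(C2)} \Rightarrow \mathrm{(C2')}$ immediate. But then (C2) holds vacuously when no admissible linearization exists, so $\mathrm{(C2)}\Rightarrow\exists$ fails. To close the cycle you need either the artifact's precise definitions or an image-level hypothesis of GCC type, which makes every safe prefix of positive length safely extendable by every candidate.

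Your $\mathrm{GCC}\Rightarrow\mathrm{RUC}_{\mathcal{S}}$ argument has a smaller hole of the same kind. The inclusion $\Reach_k(\mathcal{S})\subseteq\mathrm{Im}(D_{\ell_k})\cap\mathcal{P}_T$ needs $k\ge 1$. At the empty prefix, GCC says nothing about $\Safe(D_\ell)\cap\mathcal{S}$ for $\prec$-minimal $\ell$. For instance, take $\mathcal{P}_T=\{P_0,P_1\}$, $D_a(P_0)=P_1$, $D_b(P_0)\notin\mathcal{P}_T$, both fixing $P_1$, $\prec=\varnothing$, and $\mathcal{S}=\{P_0\}$. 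This satisfies GCC but not even $\mathrm{PLC}_{\mathcal{S}}$. You therefore need a convention such as a virtual identity stage $\ell_0$, or a seed clause in GCC.

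The rest is sound: the unfolding of $\Phi^*=0$, $\mathrm{RUC}_{\mathcal{S}}\Rightarrow\mathrm{PLC}_{\mathcal{S}}$, the strictness witnesses (your first one is the paper's own Example), and the singleton-seed argument for the seed-independent collapse.
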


\begin{proof}[Proof sketch]
(I): Theorems~2b-14/2b-15 of phase2b artifact; collapse under
$\mathrm{PLC}_{\mathcal{S}}$ by Theorem~2c-1 of phase2c artifact.
(II): Propositions~2c-2 through~2c-6 of phase2c artifact. Detailed
proofs are not reproduced here; the present section is a summary.
\qed
\end{proof}

\begin{remark}[Notational convention]
We use $\mathrm{GCC} \Rightarrow \mathrm{RUC}_{\mathcal{S}}$ to denote
that GCC is logically stronger than $\mathrm{RUC}_{\mathcal{S}}$ (a
DQA satisfying GCC also satisfies $\mathrm{RUC}_{\mathcal{S}}$). This
is implication ordering, distinct from set inclusion. ``Strict'' means
the implication is not biconditional: there exist DQAs satisfying
$\mathrm{RUC}_{\mathcal{S}}$ but not GCC, and DQAs satisfying
$\mathrm{PLC}_{\mathcal{S}}$ but not $\mathrm{RUC}_{\mathcal{S}}$.
\end{remark}

\subsection{Strictness Witness}

\begin{example}[$\mathrm{PLC}_{\mathcal{S}}$ without
$\mathrm{RUC}_{\mathcal{S}}$]
$L = \{a,b\}$, $\prec = \varnothing$, $\mathcal{P}_T = \{Q_1, Q_2,
Q_3\}$, $\mathcal{S} = \mathcal{P}_T$.
$\Safe(D_a) \cap \Reach_0 = \{Q_1, Q_2\}$;
$\Safe(D_b) \cap \Reach_0 = \{Q_1, Q_3\}$.
Both $\chi_a = \chi_b = 0$ ($\mathrm{PLC}_{\mathcal{S}}$ holds);
restricted safe sets differ ($\mathrm{RUC}_{\mathcal{S}}$ fails). PLC
checks the binary all-or-nothing judgment; RUC checks set identity.
\end{example}

\section{Changelog}

This section records changes across the v2.15.1 $\to$ v2.16 $\to$
v2.16.2 $\to$ v2.16.3 $\to$ v2.16.4 revision sequence. v2.16 was an
errata revision of v2.15.1 addressing 13 issues; v2.16.2 was an
incremental fix addressing one identified gap (M1, Lemma~G monotonicity)
plus minor follow-up; v2.16.3 tightens four claim-strength items
identified in post-v2.16.2 review; v2.16.4 is a mathematical-correctness
patch addressing four findings from a third-round adversary review (S1
Soundness, S3 Lemma~G saturated form with G3 reframed as guard-stability,
S4 Phase~2 summary qualifier, U3 statement-level de-tautology).

\subsection{v2.16.3 $\to$ v2.16.4 (mathematical correctness fixes)}

\textbf{S1 (Soundness assumption added to Corollary 6b').}
v2.16.3 \S5.3 Corollary~6b' implicitly assumed admissibility
preservation under the phrase ``by construction of
$I^{\mathrm{core}}$-computability'' but did not declare it as a
hypothesis. The reviewer's reflexive counterexample (rule
``$a \Rightarrow b$'' plus axiom forbidding $\{a, b\}$ satisfies
R1+R2+R3 syntactically while saturated closure exits $\mathcal{P}_T$)
shows the corollary as stated is false. v2.16.4
(i)~reintroduces $f$ as $F^\omega$ on the unrestricted state space and
uses Soundness to derive the restriction $\mathcal{P}_T \to
\mathcal{P}_T$ (avoiding type-circularity of writing
``$f: \mathcal{P}_T \to \mathcal{P}_T$'' and then proving it),
(ii)~declares Soundness as an explicit additional hypothesis,
(iii)~updates the proof body to invoke Soundness for admissibility
preservation,
(iv)~adds Remark~\ref{rem:soundness-independent} distinguishing the
syntactic constraints R1+R2+R3 from the semantic Soundness condition,
and (v)~makes the elementary one-rule-firing interpretation of $F$
explicit at statement level, so single-rule Soundness suffices (a
separate counterexample exists for simultaneous-firing semantics: rules
``$a \Rightarrow b$'' and ``$a \Rightarrow c$'' plus axiom forbidding
$\{b, c\}$ are individually sound but jointly unsound). The proof also
adds an order-independence sentence noting that, in the pure (3a)
positive-rule setting under fair elementary scheduling, saturation is
order-independent (every conclusion whose positive premises eventually
become true is eventually added; no rule retracts). For $T_{ICT}$ the
soundness role is filled by M1.

\smallskip\textbf{S3 (Lemma G saturated form, mirror of T4; G3 reframed
as guard-stability).} v2.16.3 T4 fixed the (3a) realization as the
saturated closure $f = F^\omega$. v2.16.3 Lemma~G ((3c)-scope) retained
the ambiguity between one-step operator and saturation: the proof
showed iteration reaches a fixed point $P^*$ and concluded
$f(P^*) = P^*$, but did not establish $f \circ f = f$ for arbitrary
$P$. v2.16.4
(i)~declares $f := G^\omega$ explicitly (one-step guarded operator $G$,
saturated closure $G^\omega$), making idempotence a saturation
tautology,
(ii)~rewrites the proof body as a 6-step argument
($G$-trajectory termination $\to$ $G^\omega$ well-defined $\to$
restriction by Soundness $\to$ extensivity $\to$ idempotence by
saturation $\to$ (4') with $k = 1$ for $f$),
(iii)~\textbf{reframes G3 as a guard-stability property of the
ICT-style guard, not as the source of saturation well-definedness ---
well-definedness of $G^\omega$ comes from R1 + finite + (G2)
functionality, while (G3) ensures that suppression persists along the
trajectory},
(iv)~synchronizes the \S8.0 (3c) definition with the saturated form
($f = G^\omega$ explicit),
and (v)~adds the same Soundness hypothesis as S1, applied to $G$.
Internal consistency with the (3a) treatment under T4.

\smallskip\textbf{S4 (Phase 2 theorem qualifier).} v2.16.3 \S10.2 stated
a theorem whose detailed proofs reside in companion artifacts
\texttt{phase2b} / \texttt{phase2c}. v2.16.4
(i)~extends the theorem name with ``Phase 2 summary; proofs in
companion artifacts'',
(ii)~prepends a bold parenthetical clarifying that proofs are not
reproduced in the paper and recording the companion artifacts,
and (iii)~extends the \S10 opener accordingly. Wording is restrained
(no claim that the artifacts constitute ``established external
results''), reflecting that the artifacts are not necessarily publicly
accessible.

\smallskip\textbf{U3 statement-level de-tautology.} v2.16.3 Theorem~U3
had a hypothesis of the form ``$f$ is a valid canonicalization
mechanism, i.e.\ Class~C or Class~S''---putting the conclusion inside
the hypothesis. v2.16.4 deletes this clause and lifts the
per-realization assumptions into the statement: R1+R2+R3 + Soundness +
elementary $F$ for (3a); theory-intrinsic $\preceq_{\mathrm{sel}}$ +
C5-S for (3b); R1+R2+R3 + (G2) + (G3) + Soundness for (3c). The
Class~C / Class~S conclusion is now derived per-case from per-branch
hypotheses, not pre-assumed.

\smallskip
No statement of any other theorem, lemma, or corollary changes; no
classification result changes scope. v2.16.4 is a mathematical-correctness
patch, not a content revision.

\subsection{v2.16.2 $\to$ v2.16.3 (claim-strength tightening)}

\textbf{T1 (C5-S redefined: global compatibility of selected choices).}
v2.16.2 \S5.1 defined C5-S as ``C5 applies over all locally admissible
pairs $c_1, c_2 \in A_{r,t}$''---i.e., requiring joint admissibility of
local alternatives. This conflicts with Type~S-strong, which is
defined precisely by alternatives lacking a common upper bound.
v2.16.3 redefines C5-S as global admissibility of the selector's
output: the assignment $P^*: (r,t) \mapsto \{c^*_{r,t}\}$ extends to an
element of $\mathcal{P}_T$, where $c^*_{r,t} =
\max_{\preceq_{\mathrm{sel}}} A_{r,t}$. Selection in Type~S-strong now
operates correctly: the selector picks one alternative per locus, and
C5-S asserts that the resulting global assignment is admissible. C5-E
(completion variant) is unchanged.

\smallskip\textbf{T2 (Theorem~U3 explicitly conditional on
OQ-Realization).} v2.16.2 stated U3 as ``no third primitive class
arises in MST'' with realization-coverage tracked as informal. v2.16.3
restates U3 as conditional on OQ-Realization: ``\emph{Assume the
realization-coverage hypothesis (OQ-Realization).} Then within the
(3a)/(3b)/(3c) realization grammar, every valid theory-intrinsic
canonicalization mechanism is Class~C or Class~S.'' Theorem title
changed to ``Conditional Classification within the MST Realization
Grammar.'' Realization-coverage examples expanded in
Remark~\ref{rem:realization-coverage} to flag mechanisms outside the
declared grammar (cardinality-minimization, orbit selectors,
priority-valued canonicalizers, hybrid selector/closure operators).

\smallskip\textbf{T3 (Non-commutativity ordering uniqueness scoped).}
v2.16.2 stated ``HTF-first ordering is the unique
admissibility-preserving staged completion order'' without scope
qualifier. v2.16.3 \S7 separates the witness (Proposition, unchanged)
from a scoped ordering-uniqueness corollary
(Corollary~\ref{cor:noncomm-uniqueness}): ``\emph{Within the
two-operator raw staged architecture
$\{D_L \circ D_H, D_H \circ D_L\}$, with no repair operator, no M2-HTF
guard added to $D_H$, and no joint fixed-point construction}, HTF-first
is the only admissibility-preserving linearization.'' A scope remark
explicitly notes that alternative architectures (with repair operators,
symmetrically guarded $D_H$, or joint fixed-point constructions) lie
outside the present scope.

\smallskip\textbf{T4 ((3a) saturated form made explicit).}
v2.16.2 referred to ``$I^{\mathrm{core}}$-computable in the pure sense
(3a)'' without distinguishing one-step rule application from saturated
closure. v2.16.3 makes the saturated interpretation
$f = F^\omega$ explicit in \S8.0 (3a) definition,
Corollary~6b' statement and proof, Lemma~C statement, U3 Case~1, and
\S8.5 status board. The one-step operator $F$ alone is generally not
idempotent; saturation is required for idempotence and is built into
the (3a) realization throughout.

\smallskip\textbf{T5 (LLM-assisted reasoning hook restored).} v2.16.2
abstract removed the LLM-motivation framing present in v2.15.1.
v2.16.3 restores an opening LLM/cs.AI hook in the abstract and adds an
explicit \S1 introduction paragraph and a closing \S9 paragraph
connecting canonicalization to hallucination as ``unsupported
canonicalization.''

\smallskip\textbf{T6 (Phase~2 hierarchy notation: implication, not
inclusion).} v2.16.2 wrote $\mathrm{GCC} \supsetneq
\mathrm{RUC}_{\mathcal{S}} \supsetneq \mathrm{PLC}_{\mathcal{S}}$,
which is set-inclusion notation but in the wrong direction (GCC is
the strongest condition, so the set of GCC-satisfying DQAs is the
\emph{smallest}). v2.16.3 changes this to implication notation:
$\mathrm{GCC} \Rightarrow \mathrm{RUC}_{\mathcal{S}} \Rightarrow
\mathrm{PLC}_{\mathcal{S}}$. A notational convention remark is added
to \S10.2.

\smallskip\textbf{T7 (Core principle restated to align with C5-S/C5-E
asymmetry).} v2.16.2 abstract closed with ``Canonicalization is not a
consequence of closure alone, but of closure together with
comparability and joint admissibility.'' This treated C5-S and C5-E as
parallel, conflicting with T1. v2.16.3 restates: ``Completion-based
canonicalization requires closure, comparability, and compatible
extension; selection-based canonicalization requires theory-intrinsic
comparability and global compatibility of selected choices.''

The (4') closure-stabilization conclusion is unchanged in scope and
strength. All Class~C/Class~S classifications, the Type~S-strong
impossibility result, and ICT/Wyckoff/Chan instance assignments are
unchanged.

\subsection{v2.16 $\to$ v2.16.2 (M1 fix and minor follow-up)}

\textbf{M1 (Lemma~G monotonicity claim removed).} v2.16 \S8.2 Lemma~G
asserted that (3c)-realized mechanisms inherit monotonicity in
$\Pspec$ from $I^{\mathrm{core}}$ rules under hypotheses (G1)
state-monotone guard, (G2) functionality, (G3) additive-stable guard.
Audit identified that (G1) does not hold in the asserted direction for
ICT $\DcompHTF$: guard suppression on $Q$ does not imply guard
suppression on $P$ when $P \Pspec Q$---the opposite direction is what
holds (more predicates means more witnesses, hence more suppression).
This made the v2.16 monotonicity claim incorrect in its application to
ICT.

\smallskip A concrete witness shows $\DcompHTF$ is not monotone in
$\Pspec$: take $P = \emptyset$ and $Q = \{\mathrm{HTF\_bearish\_bias}\}$
at an input with $\mathrm{close} > \mathrm{prev\_close}$. Then $\rho_3$
fires on $P$ (yielding $\mathrm{LTF\_bullish\_cont} \in \DcompHTF(P)$)
but is M2-HTF-suppressed on $Q$ (so $\mathrm{LTF\_bullish\_cont} \notin
\DcompHTF(Q)$).

\smallskip v2.16.2 fixes this as follows:
\begin{itemize}
\item \S8.2 Lemma~G: removed monotonicity claim and (G1) hypothesis.
  Lemma~G now establishes only extensivity, per-seed~(4'), and
  idempotence under (G2)+(G3). Sufficient because Monotonic Exhaustion
  already requires only finite + extensive (monotonicity was
  redundant).
\item \S8.2 Monotonic Exhaustion Lemma: hypothesis tightened to
  ``extensive + finite''.
\item \S8.2 Lemma~C: added Remark~\ref{rem:lemmaC-nonextension}
  ``Lemma~C does not extend to (3c) under naive $\Pspec$''.
\item \S6.2: added Remark~\ref{rem:monotonicity-fails} with the
  explicit ICT counterexample.
\item \S8.5 status board: updated Lemma~G entry and $\DcompHTF$
  classification entry.
\item \S9 Discussion: added paragraph ``Monotonicity in $\Pspec$ is
  mechanism-specific''.
\end{itemize}

\smallskip\textbf{m4 (Realization-coverage marked informal;
OQ-Realization added).} v2.16 \S8.2 Realization-coverage remark argued
informally that every theory-intrinsic operation is realized by
(3a)/(3b)/(3c). v2.16.2 explicitly tagged this as informal and added
OQ-Realization to \S8.5 status board. (v2.16.3 further escalates this
to a conditional hypothesis on Theorem~U3; see T2.)

\smallskip\textbf{c2 (\S9 Discussion reordered).} v2.16.2 moved
``Stabilization vs.\ determinization'' to the first paragraph of \S9,
followed by ``Mechanism classification'' and ``Monotonicity in
$\Pspec$ is mechanism-specific''.

\subsection{v2.15.1 $\to$ v2.16 (errata revision, 13 issues)}

This subsection records the v2.15.1 $\to$ v2.16 changes. Changes are
grouped by issue.

\textbf{P1 (Theorem~U3 Case~3 tautology, framing).} \S8.2 Theorem~U3
renamed ``Primitive Mechanism Classification'' (was ``Primitive
Mechanism Exhaustion''). Hypothesis ``uniqueness-inducing'' removed
from U3 statement. Added scope note: U3 does not derive (4) or AC-6.
Added Lemma~G (Guard Preservation) in \S8.2 to give substantive proof
for Case~3 mechanisms; previous tautology replaced by explicit
(G2)+(G3) hypothesis (v2.16.2: Lemma~G scope restricted; (G1)
removed---see M1; v2.16.3: U3 made explicitly conditional---see T2;
v2.16.4: per-branch hypotheses lifted into U3 statement,
G3 reframed as guard-stability).
Added Realization-coverage remark (v2.16.2: marked informal,
OQ-Realization added).

\textbf{P2 (R2*(iii) restatement of U3).} \S8.1 Lemma~R2* part (iii)
deleted. R2* now consists only of parts (i) and (ii). Mechanism
reduction is the content of Theorem~U3, not R2*.

\textbf{P3 (C5-E vague definition).} \S5.1 C5-E rewritten as
framework-level explicit definition with the closure-domain conflict
relation made abstract. ICT instance (M2 / $\mathrm{M2} \cup
\mathrm{M2\text{-}HTF}$) given as example, not embedded in definition.

\textbf{P4 (Type~S $\to$ selection dichotomy overclaim).} Introduced
Type~S-strong as named subclass in \S3.1. \S4.2 Proposition renamed
and re-hypothesized for Type~S-strong. Wyckoff explicitly classified
as Type~S-strong. Abstract, \S1 contributions list, \S3 examples
table, \S4.2 architecture remark, \S8.5 status board, \S9 Discussion
all updated. OQ-TypeS-Imp tracks the open generic-Type-S case.

\textbf{P5 (uniqueness-inducing terminology + Monotonic Exhaustion
scope).} \S4.1: added Definition (Uniqueness-inducing operator) as
named alias for property (4). \S8.2 Monotonic Exhaustion Lemma renamed
``(per-seed)'', proof shrunk to use only finite + extensive $\to$ (4');
counterexample ($\{a,b\}$ + identity) inserted showing
(4') $\not\Rightarrow$ (4). \S5.3 Corollary~6b' hypothesis dropped
``uniqueness-inducing''. \S8.0 Class~C definition cleaned. \S8.1 R2*
hypothesis explicitly references \S4.1. \S8.2 U3 hypothesis dropped
``uniqueness-inducing''.

\textbf{P6 (Q4 domain ambiguity).} \S10.1 Q4 reformulated as
partial-function form: agreement on $P$ where both sides are defined.

\textbf{P7 (\S6.2 wording --- ``unique normal form for that seed'').}
Replaced by ``each seed converges to a fixed point within
$\PICT^{\mathrm{multi}}$''.

\textbf{P10 (\S5.3 Theorem~6b title).} Title changed to ``Construction
Lemma for Canonicalization Mechanisms''.

\textbf{R1 (Theorem~6b vs.\ Corollary~6b' position).} Theorem~6b
explicitly labeled ``(Construction Lemma)''; remark added explaining
``forward'' arrows are constructions. Corollary~6b' explicitly labeled
as ``the substantive structural result''.

\textbf{R2 (theory-intrinsic definition circularity).} \S8.0
theory-intrinsic definition rewritten using independent conditions
(I1)--(I2). The (3a)/(3b)/(3c) trichotomy is now a separate
``Realizations'' structure, the case partition for Theorem~U3.

\textbf{R3 (U3 Case~3 still circular).} Addressed via Lemma~G (\S8.2):
substantive proof that (3c)-realized mechanisms inherit (1), (2), (4')
properties under (G2)+(G3) (v2.16.2 scope; see M1; v2.16.4: G3
reframed as guard-stability, Soundness added---see S3).

\textbf{R4 (BOS premise mutual exclusion in \S6.2 Functionality
Lemma).} \S6.2: added ``Premise mutual exclusion'' subsection.
Functionality Lemma proof restructured.

\textbf{R5 (missing Lemmas 1--3 in \S6.2 idempotence).} \S6.2
idempotence verification rewritten as direct argument.

\textbf{R6 (\S5.1 N4 vs.\ \S8.3 refined N4).} \S5.1 N4 amended with
``Refined treatment'' subsection that explicitly references \S8.3.

\textbf{R7 (abstract honesty re ICT determinization).} Abstract
revised to explicitly state ``the present paper does not establish
ICT determinization at AC-6 strength''. ICT AC-6 conditional on
OQ-GC-1; Wyckoff AC-6 directly via PhaseClassify.

\textbf{R8 (Lemma~C two-class premise reasoning).} \S8.2 Lemma~C
proof expanded with explicit two-class analysis. R2 also clarified.

\textbf{R11 / OQ-Chan-TRS, OQ-Det-Coh tracking.} \S3.2 Chan rewrite
presentation tagged OQ-Chan-TRS. \S8.4 Det pseudofunctor coherence
tagged OQ-Det-Coh. \S8.5 status board updated with all OQs.

\appendix
\section{Conflict Matrices}

\subsection{M2 --- ICT Single-Timeframe Closure Domain}

\emph{Scope}: $\PICT^{\mathrm{closure}}$.
$\times$ = forbidden pair; $\checkmark$ = compatible.

\begin{center}
\small
\begin{tabular}{lcccc}
\toprule
& $\mathrm{BOS_{up}}$ & $\mathrm{BOS_{down}}$
& $\mathrm{bull\_cont}$ & $\mathrm{bear\_cont}$ \\
\midrule
$\mathrm{BOS_{up}}$    & --- & $\times$ & $\checkmark$ & $\times$ \\
$\mathrm{BOS_{down}}$  & $\times$ & --- & $\times$ & $\checkmark$ \\
$\mathrm{bull\_cont}$  & $\checkmark$ & $\times$ & --- & $\times$ \\
$\mathrm{bear\_cont}$  & $\times$ & $\checkmark$ & $\times$ & --- \\
\bottomrule
\end{tabular}
\end{center}

\noindent (Full predicate names: LTF\_BOS$_{\mathrm{up/down}}$,
LTF\_bullish/bearish\_cont.)

\subsection{M2-HTF --- Cross-Timeframe Forbidden Pairs}

\emph{Scope}: $\PICT^{\mathrm{multi}}$. HTF predicates dominate
opposing LTF predicates.

\begin{center}
\small
\begin{tabular}{lcccc}
\toprule
& $\mathrm{BOS_{up}}$ & $\mathrm{BOS_{down}}$
& $\mathrm{bull\_cont}$ & $\mathrm{bear\_cont}$ \\
\midrule
$\mathrm{HTF\_BOS_{up}}$       & $\checkmark$ & $\times$ & $\checkmark$ & $\checkmark$ \\
$\mathrm{HTF\_BOS_{down}}$     & $\times$ & $\checkmark$ & $\checkmark$ & $\checkmark$ \\
$\mathrm{HTF\_bull\_bias}$     & $\checkmark$ & $\checkmark$ & $\checkmark$ & $\times$ \\
$\mathrm{HTF\_bear\_bias}$     & $\checkmark$ & $\checkmark$ & $\times$ & $\checkmark$ \\
\bottomrule
\end{tabular}
\end{center}

\end{document}